\documentclass[letterpaper]{article} 
\usepackage{aaai24}  
\usepackage{times}  
\usepackage{helvet}  
\usepackage{courier}  
\usepackage[hyphens]{url}  
\usepackage{graphicx} 
\urlstyle{rm} 
\usepackage{natbib}  
\usepackage{caption} 
\frenchspacing  
\setlength{\pdfpagewidth}{8.5in} 
\setlength{\pdfpageheight}{11in} 
%
\usepackage[ruled,linesnumbered, noend]{algorithm2e}
\SetCommentSty{emph}
\usepackage[table,xcdraw]{xcolor}
\SetAlgoSkip{}
%
\usepackage{newfloat}
\usepackage{listings}
\DeclareCaptionStyle{ruled}{labelfont=normalfont,labelsep=colon,strut=off} 
\lstset{%
	basicstyle={\footnotesize\ttfamily},
	numbers=left,numberstyle=\footnotesize,xleftmargin=2em,
	aboveskip=0pt,belowskip=0pt,%
	showstringspaces=false,tabsize=2,breaklines=true}
 
%
\pdfinfo{
/TemplateVersion (2024.1)
}

\usepackage{color} 
\usepackage{amsfonts}
\usepackage{amsmath} 


\usepackage{amsthm}
\newtheorem{lem}{Lemma}

\newtheorem{thm}{Theorem}




\newcommand{\confspace}[2]{\mathcal{Q}^{#1}_{#2}}
\newcommand{\compconf}[1]{q_{#1}} 
\newcommand{\conf}[2]{q^{#1}_{#2}} 
\newcommand{\confstart}[1]{q^{#1}_{\text{start}}}
\newcommand{\confgoal}[1]{q^{#1}_{\text{goal}}}
\newcommand{\robot}[1]{\mathcal{R}_{#1}}
\newcommand{\open}{\text{OPEN}}
\newcommand{\focal}{\text{FOCAL}}

\makeatletter
\renewenvironment{proof}[1][\proofname]{\par
  \vspace{-\topsep}
  \pushQED{\qed}%
  \normalfont
  \topsep3pt \partopsep3pt 
  \trivlist
  \item[\hskip\labelsep
        \itshape
    #1\@addpunct{.}]\ignorespaces
}{%
  \popQED\endtrivlist\@endpefalse
  \addvspace{3pt plus 3pt} 
}
\makeatother

%
%
%

\usepackage[switch, modulo]{lineno}


\setcounter{secnumdepth}{0} 

%

\title{Accelerating Search-Based Planning for Multi-Robot Manipulation by Leveraging Online-Generated Experiences}
\author {
    Yorai Shaoul\equalcontrib,
    Itamar Mishani\equalcontrib,
    Maxim Likhachev,
    Jiaoyang Li
}
\affiliations {
    Carnegie Mellon University\\
    \{yshaoul, imishani, maxim\}@cs.cmu.edu, 
    jiaoyangli@cmu.edu
}

\begin{document}
\maketitle

\begin{abstract}
An exciting frontier in robotic manipulation is the use of multiple arms at once. 
However, planning concurrent motions is a challenging task using current methods. The high-dimensional composite state space renders many well-known motion planning algorithms intractable.
Recently, Multi-Agent Path-Finding (MAPF) algorithms have shown promise in discrete 2D domains, providing rigorous guarantees. However, widely used conflict-based methods in MAPF assume an efficient single-agent motion planner. This poses challenges in adapting them to manipulation cases where this assumption does not hold, due to the high dimensionality of configuration spaces and the computational bottlenecks associated with collision checking.
To this end, we propose an approach for accelerating conflict-based search algorithms by leveraging their repetitive and incremental nature -- making them tractable for use in complex scenarios involving multi-arm coordination in obstacle-laden environments. 
We show that our method preserves completeness and bounded sub-optimality guarantees, and demonstrate its practical efficacy through a set of experiments with up to 10 robotic arms.

\end{abstract}

\section{Introduction}

The synchronous use of multiple robotic arms may enable new application domains in robotics and enhance the efficiency of tasks traditionally carried out by a single arm. For example, in pick-and-place tasks (e.g., Fig. \ref{fig:teaser}), multiple arms can potentially be more efficient than a single one, and in a manufacturing setting, multiple arms can be used to assemble a product collaboratively, unlocking the capability to perform tasks that are beyond the scope of a single arm.
However, the inherent complexity of single-agent motion planning for robot manipulation \cite{canny1988complexity} makes it challenging to plan for multiple arms while ensuring collision-free paths, and thus has left Multi-Robot-Arm Motion Planning (M-RAMP) a relatively under-explored frontier in robotics.

To enable the use of multiple arms in more complex scenarios, we propose a method for accelerating M-RAMP. 
Our approach capitalizes on a key observation: widely-used Multi-Agent Path Finding (MAPF) algorithms exhibit a significant degree of repetitive planning. We exploit this repetitiveness by developing a method that leverages experiences gathered during the planning process. Unlike previous approaches that utilize incremental search techniques \cite{boyarski2021iterative}, we allow the use of bounded sub-optimal search techniques, which are crucial for exploring high-dimensional state spaces. To this end, we accelerate the single-agent planning process by reusing online-generated path experiences to speed up multi-agent search, ensuring both completeness and solution quality guarantees.

\begin{figure}[t]
    \centering
    \includegraphics[width=0.99\linewidth]{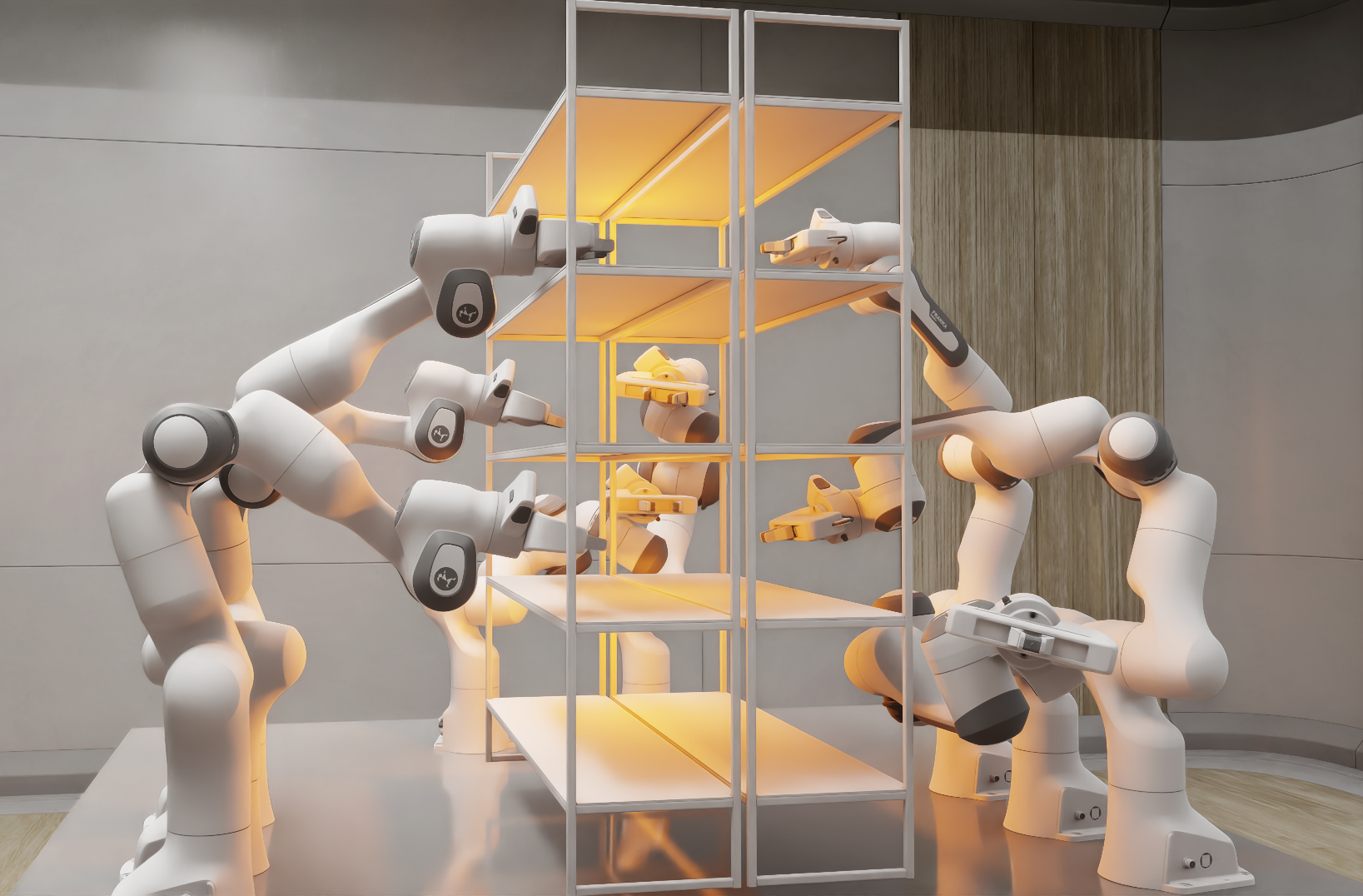}
    \caption{Eight robotic manipulators, each of 7-DoF, collaborating in a shelf-rearrangement pick and place task. Planning concurrent motions for all arms requires a motion planner capable of efficiently exploring a single arm's high-dimensional state space and reasoning about the motions of multiple robots operating in the shared task space.}
    \label{fig:teaser}
\end{figure}

Our contributions in this paper are threefold. First, we introduce a novel method for M-RAMP that leverages the Conflict-Based Search framework and effectively re-uses intermediate search efforts to accelerate the search. Second, we provide a comprehensive theoretical analysis of our proposed framework, demonstrating its bounded sub-optimality guarantees.
Third, we offer an empirical evaluation of our method and other algorithms in various multi-arm manipulation scenarios that include deadlock avoidance, cluttered environments, and closely interacting goals.

\section{Related Work}
\label{sec:related_work}
The literature has extensively examined path planning for both single and multiple agents. In the context of single-agent search, decades of research have yielded algorithms capable of scaling successfully to high-dimensional and computationally expensive search spaces. However, efforts in MAPF have generally been applied to domains such as 2D-grid worlds, resulting in algorithms that often rely on assumptions such as fast single-agent planning and informative heuristics. These assumptions may not always hold in other scenarios such as robotic manipulation. In this section, we discuss relevant work in MAPF, describe common approaches to M-RAMP in practice, and review the use of experiences in planning that inspired us to connect MAPF algorithms with multi-arm manipulation more effectively.

\subsection{Multi-Agent Path Finding}

MAPF is the problem of finding collision-free paths for a set of agents on a graph (e.g., on a grid world) \cite{stern2019multi}. MAPF has been studied extensively, and optimal (e.g., CBS \cite{sharon2015conflict}), bounded sub-optimal (e.g., ECBS \cite{barer2014suboptimal}), and sub-optimal but effective (e.g., MAPF-LNS2 \cite{li2022mapf}) algorithms have been proposed. Some work has also been done to generalize MAPF algorithms to non-point robots \cite{li2019multi}, however, the most common domain is still in 2D. Arguably, the most influential family of algorithms is Conflict-Based Search (CBS) and its extensions \cite{sharon2015conflict, barer2014suboptimal, li2021eecbs}. CBS is a two-level search algorithm, where at the low level, each agent is assigned a single-agent path-planning problem. At the high level, collisions between single-agent solutions are resolved by imposing constraints on the low-level planners.

CBS is known to provide completeness and optimality guarantees. However, CBS is also known to be computationally expensive as it requires repeated low-level searches upon additions of constraints. Given this inefficiency, CBS is often regarded as impractical for domains, such as manipulation, where planning for a single agent requires exploring a high-dimensional space and does not enjoy informed heuristics.
In this work, we capitalize on this repetition and propose a method for accelerating CBS-based algorithms by reusing online-generated previous search solutions. 

\subsection{Planning for Multi-Arm Manipulation}
In practice, planning for multi-arm manipulation is often done with coupled or prioritization methods. In coupled methods, the state of all arms is seen as a single composite state, and the search is performed in this space with algorithms such as A* \cite{a*}, Rapidly-exploring Random Trees (RRT) \cite{RRT*}, and their variants (e.g., weighted A*, RRT* \cite{RRT*}, and RRT-Connect \cite{RRT-Connect}). With the addition of more arms, the search space grows exponentially, and in general, coupled methods may not scale to large numbers of arms.

In scenarios where coupled planning is rendered intractable due to the exponential growth of the search space, prioritization methods may be effective in reducing its dimensionality. In Prioritized Planning (PP) \cite{erdmann1987multiple}, each arm is assigned a priority, and the lower-priority arms treat higher-priority arms as moving obstacles. In the general case, solving the prioritized planning problem is more efficient than solving the coupled case, as the search space is reduced to the space of each single arm. However, the price paid for this dimensionality reduction is the loss of completeness. In scenarios requiring close coordination between arms, completeness may be important.

Recently, planning algorithms have been proposed specifically for teams of high-dimensional agents and applied to multi-arm settings. These methods explore the search space via pre-constructed single-agent roadmaps (e.g., probabilistic roadmaps (PRM) \cite{PRM}),
which may need to be arbitrarily resampled \cite{solis2021representation} to find collision-free paths in complex environments.
\citet{shome2020drrt} present dRRT*, a method for exploring the composite state space of agents by traversing individual agents' roadmaps towards sampled configurations with goal bias. \citet{solis2021representation} present CBS-MP, a variant of CBS that imposes new constraints on the search to resolve collisions. Specifically, to resolve a collision between two agents, CBS-MP requires one agent to avoid the colliding configuration of the other at the time of the collision.

\subsection{Leveraging Experience in Planning}
Streamlining motion planning from experience encompasses a wide range of motion-planning algorithms. These generally benefit from either utilizing offline-generated data (i.e., precomputation), leveraging online-generated data, or both.  

\subsubsection*{Precomputation as Experience} 

The utilization of offline computations to enhance online search efficiency is well exemplified by the PRM algorithm and its variants. Another novel approach is found in the Constant-Time Motion Planners (CTMP) family of algorithms, which operates on precomputed data structures to achieve constant-time path generation in online scenarios \cite{ctmp, ConveyerCTMP, mishani2023constanttime}. In recent research, a significant focus has been on the offline decomposition of the configuration space into collision-free convex sets \cite{polyhedral-convex}. This decomposition enables planning smooth trajectories within these sets using optimization methods \cite{marcucci2022motion}. 
Furthermore, various algorithms based on precomputed trajectories \cite{egraphs}, 
have been employed to expedite the search process. When extending these techniques to plan for multi-arm setups, it becomes essential to decompose the composite configuration space for computing collision-free trajectories. 
However, challenges arise when the environment or the robot undergoes changes, which can be as simple as rotating a bin or altering the robot's geometry by grasping an item. These changes may require resource-intensive operations like redoing precomputation or propagating changes, emphasizing a drawback inherent to using offline-generated experiences. 

\subsubsection*{Online-Generated Experiences}
Online-generated experiences have been used in MAPF, but more commonly in anytime incremental heuristic search and reinforcement learning (RL). In the heuristic search, experiences are used to directly guide the search, and in RL, as replay buffers to improve learning stability and sample efficiency \cite{replay_buffer}.
In MAPF, one algorithm leveraging repetition in planning is Iterative-Deepening CBS (IDCBS) \cite{boyarski2021iterative} that employs Lifelong Planning A* (LPA*) \cite{koenig2004lifelong} for single-agent planning. However, this approach faces challenges in manipulation cases where bounded sub-optimal search is employed to navigate the high-dimensional search space \cite{likhachev2005generalized}. 

Anytime algorithms, like Anytime Repairing A* (ARA*) \cite{anytime_dynamicA*}, can be seen as methods that use experiences generated online to improve the quality of the solution over time. ARA* performs a sequence of searches that, given enough time, converge to the optimal solution. 
A recent anytime approach inspired by \citet{anytime_dynamicA*, egraphs} and presented in \citet{mishani2023constanttime} employs both precomputation and online experience. Their algorithm computes an initial, potentially sub-optimal path within a (short) constant time and improves the quality of the path using the current best solution as an experience.

Drawing inspiration from the way \citet{mishani2023constanttime} capitalize on the flexibility seen in online-generated experiences, and with the observation that CBS-based algorithms inherently exhibit repetition in the form of nearly identical single-agent planning queries, we propose a method for accelerating MAPF algorithms by reusing online-generated experiences.

\begin{figure*}[t]
    \centering
    \includegraphics[width=\linewidth]{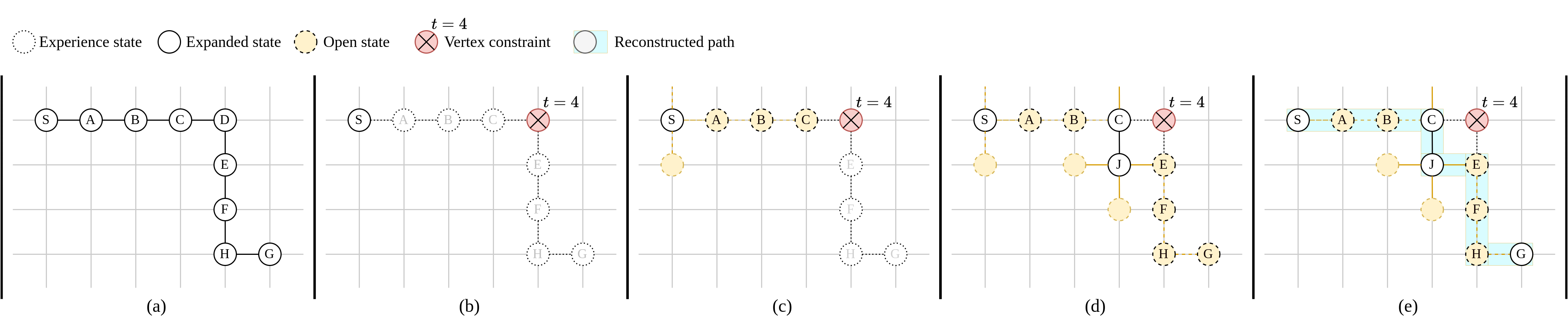}
    \caption{An illustration of our proposed algorithm accelerating a single agent search on a four-connected grid via reusing previous search efforts. 
    (a) A single-agent path from $S$ to $G$ computed in a previous iteration. 
    (b) Upon imposing a new constraint on the agent, shown in red, replanning is required. The previous path is drawn in light gray. 
    (c) Upon expansion of node $S$, a prefix $\{A, B, C\}$ of the experience path is added to OPEN along with all other successors of $S$. (d) shows two steps: node $C$ is selected for expansion from OPEN, and in the next iteration node $J$ is expanded from OPEN. Upon expanding $J$, a segment of the experience is added to OPEN, since one of $J$'s successors is equivalent to a node in the experience. 
    (e) Finally, $G$ is expanded from OPEN and the search terminates and recovers a path. In this example, the work done by xWA* (Alg. \ref{alg:low-level}) is smaller than that of its previous iteration. By reusing experience, the intermediate nodes expanded are $C$ and $J$.}
    \label{fig:low-level}
\end{figure*}

\section{Preliminary}

In this paper, we propose a method for solving the M-RAMP problem by extending the CBS algorithm and its variants to reuse search efforts. We first describe the problem formulation, explain how to cast it as a graph-search problem, and then detail how the CBS algorithm can solve M-RAMP.
\subsection{M-RAMP: Problem Formulation}

Consider $\confspace{i}{} \subseteq \mathbb{R}^d$ as the configuration space of a single robotic arm $\mathcal{R}_i$ with $d$ degrees of freedom (DoF).
A configuration $\conf{i}{} \in \confspace{i}{}$ is defined by assigning values to all the DoF, i.e., defining the joint angles.

Let the composite configuration space of $n$ robotic arms be $\confspace{}{} = \confspace{1}{} \times \confspace{2}{} \times \cdots\confspace{n}{}$. With all manipulators operating within the same environment $\mathcal{W} \subset \mathbb{R} ^ 3$, let $\confspace{}{\text{free}}$ be the set of all collision-free composite configurations (both with the environment and between robots): 
\[\confspace{}{\text{free}} = \{\compconf{} \in \mathcal{Q} \mid \compconf{} \text{ is collision-free}\}\]
Given an initial composite configuration $\compconf{\text{start}} \in \confspace{}{\text{free}}$ and a composite goal configuration $\compconf{\text{goal}} \in \confspace{}{\text{free}}$, 
we want to find a shortest valid path $\bar \Pi: [0,T] \rightarrow \confspace{}{\text{free}}$, in terms of per-robot time in motion, where $\bar \Pi(0) = \compconf{\text{start}}$ and $\bar \Pi(T) = \compconf{\text{goal}}$.

\subsection{M-RAMP as Graph Search}
A discrete analog of the M-RAMP problem is to find a sequence of composite configurations $\{\compconf{0}, \compconf{1}, \cdots, \compconf{T}\}$ such that $\forall t \in \{0, \cdots T\}, \; \compconf{t} \in \confspace{}{\text{free}}$, each interpolated configuration between $\compconf{t}$ and $\compconf{t+1}$ is collision-free, and $\compconf{0} = \compconf{\text{start}}$ and $\compconf{T} = \compconf{\text{goal}}$. Instead of addressing the motion planning problem in the high-dimensional composite state space, it is possible to decompose the problem into a set of single-agent motion planning problems each computing a collision-free path $\pi^i$ for $\robot{i}$ between $\confstart{i}$ and $\confgoal{i}$. 

One way to realize single-agent planning is by creating a lattice of allowable motions (often a small rotation in a single joint) and encoding those in an implicit graph for each robot $\robot{i}$ to represent its configuration space. Vertices correspond to configurations $\conf{i}{}$ and edges to unit-time transitions between configurations \cite{cohen2011adaptiveprimitives}. Between time steps, agents can traverse an edge or wait at a vertex. 

When dealing with robot arms that have complex search spaces 
it is generally infeasible to enumerate all collision-free vertices and edges before the search or to compute informative heuristics.
Each validity check requires querying a \textit{collision-checker}, which determines the arms' occupancy in $\mathcal{W}$ via forwards kinematics and checks for collisions. This operation is usually computationally expensive. 
Therefore, it is common to determine edge and vertex validity during the search itself and,S to further reduce the number of collision checks, employ weighted A* for the single-agent search. 

This approach allows us to find a path $\pi^i = \{\conf{i}{0}, \conf{i}{1}, \cdots, \conf{i}{T_i} \}$ for each robot that avoids collisions with obstacles, but potentially not with each other. Arms may collide during edge traversals (named \textit{edge-conflict}), and similarly when at vertex configurations (named \textit{vertex-conflicts}).
To check for validity, the individual paths can be combined into a multi-agent solution $\Pi = \{\pi^1, \pi^2, \cdots, \pi^n\}$ or as a composite path by merging the individual robot configurations at each time step, i.e., $\compconf{t} = \{\conf{1}{t},\cdots,\conf{n}{t}\}$. 
One way to find a valid (collision-free) solution of least sum of costs $|\Pi| = \sum_i |\pi^i| = \sum_i \sum_{t'=0}^{T_i-1} \text{cost}(\conf{i}{t'}, \conf{i}{t'+1})$ is to use CBS.

\subsection{Conflict-Based Search}
\label{prelims:cbs}
CBS is a two-level best-first search algorithm designed to solve the MAPF problem. Although commonly applied to 2D grids, it can handle any discrete time problem where agent motion is on a graph. 
CBS identifies and resolves conflicts between agents by imposing motion constraints and replanning paths accordingly.

CBS begins by querying a path $\pi_i$ for each agent $\robot{i}$ between its start and goal configurations without regard to other agents. This solution $\Pi = \{\pi^0,\cdots,\pi^n\}$ is a (possibly invalid) candidate solution for the problem, and it is stored in the OPEN list of the high-level search. 
Each high-level node $N$, known as a Constraint-Tree (CT) node, contains constraints $N.C$ on low-level planners and paths $N.\Pi$ that adhere to those constraints for all agents. 
The cost of a CT node $N$.cost is the sum of the costs of its stored paths $|N.\Pi|$.

CBS proceeds iteratively, selecting least-cost CT nodes $N$ from OPEN and evaluating them for conflicts. If no conflicts are found in a solution, it is accepted as valid, and the algorithm terminates. Otherwise, CBS chooses a conflict and resolves it by splitting $N$ into two child nodes that are added to OPEN. In each child node, one agent participating in the conflict is prohibited from using the edge or vertex it used during the conflict. This is imposed via \textit{edge-} or \textit{vertex-constraints}.
In M-RAMP, where each agent operates on their own graph, edge-conflicts take the form
$\langle i, \conf{i}{t}, \conf{i}{t+1}, t \rangle$ or $\langle j, \conf{j}{t}, \conf{j}{t+1}, t \rangle$, forbidding $\robot{i}$ or $\robot{j}$ from moving on the edge between the configurations $\conf{i}{t}$ and $\conf{i}{t+1}$ or $\conf{j}{t}$ and $\conf{j}{t+1}$ at time $t$. \textit{Vertex-constraints} $\langle i, \conf{i}{t}, t \rangle$, $\langle j, \conf{j}{t}, t \rangle$ forbid agents from visiting a configuration at time $t$.\footnote{Assuming complete low-level searches, CBS-based algorithms are complete as long as, when they create constraints $c_1$ and $c_2$ for resolving a conflict, there won't exist any valid solution that invalidates both $c_1$ and $c_2$. Otherwise, valid solutions with respect to conflicts will be marked as invalid against constraints. Interestingly, by viewing the completeness of CBS in this way, it can be shown that 
some CBS variants, such as CBS-MP, gain efficiency by imposing stronger constraints but sacrifice completeness. We have discussed CBS-MP's theoretical guarantees with the authors and reached this conclusion.}

\subsection{Bounded CBS (BCBS)}
BCBS \cite{barer2014suboptimal} is a bounded sub-optimal variant of CBS that trades off optimality in favor of efficiency. It employs \textit{focal search}, an algorithm based on A$^*_\epsilon$ \cite{pearl1982astarepsilon}, in its low- and high-level searches. Focal search employs two priority queues: OPEN and FOCAL. OPEN mirrors the A* queue and is sorted by an admissible priority function $f_1$. FOCAL comprises a subset of OPEN defined as FOCAL $=\{N \in \text{OPEN} \mid f_1(N) \leq w \cdot \min_{N' \in \text{OPEN}}f_1(N')\}$ with $w \ge 1$ a sub-optimality bound. The next node expanded is chosen from FOCAL according to priority function $f_2$.
Denoting search levels with a superscript, the BCBS high-level sets $f_1^H$ to the sum of costs of CT nodes $N$ and $f_2^H$ to the number of conflicts in $N$. At the low-level, $f_1^L$ is the original priority function of the search, and $f_2^L$ prioritizes nodes whose current path on the search tree has fewer conflicts against the preexisting paths of other agents. BCBS guarantees $w^Lw^H$-sub-optimal results.

\subsection{Enhanced CBS (ECBS)}
ECBS \cite{barer2014suboptimal} is an improvement on BCBS, producing solutions bounded by a single factor $w$.
To do so, the low-level in ECBS, when planning for $\robot{i}$, returns a path $\pi^i$ and a lower bound on the optimal cost $C^{*,i}$ of this path $lb(i) = \min_{s \in \text{OPEN}} f(s) \leq C^{*,i}$. 
ECBS keeps track of a lower bound $LB(N)$ on the sum of costs for each CT node $N$ defined as $LB(N) = \sum_{i=1}^{n} lb(i)$. Defining $LB = \min_{N' \in \text{OPEN}} LB(N') \leq C^*$ being a lower bound on the optimal sum of costs, and constructing the high-level FOCAL with FOCAL $=\{N \in \text{OPEN} \mid |N.\Pi| \leq w \cdot LB\}$, CT nodes in the high-level FOCAL satisfy $|N.\Pi| \leq w \cdot LB \leq w \cdot C^*$. 

In CBS and its variants, consecutive low-level searches are nearly identical. For example, a low-level planner for agent $\robot{i}$ invoked with constraints $C_i = \left\{c \in C \; | \; c \text{ involves } \robot{i} \right\}$ may next be invoked with $C_i \cup \left\{ \langle i, \conf{i}{t}, t \rangle \right\}$ after a single new constraint is added. This minor difference suggests potential benefits from reusing parts of the previous solution. 

\label{Prob}

\section{Algorithmic Approach}
\label{alg_app}

\begin{algorithm}[!t]
    \caption{High-Level (HL) Planner}  
    \label{alg:high-level}
    \footnotesize
    \SetKwInOut{Input}{Input}
    \SetKwInOut{Output}{Output}
    \SetKwFunction{LLPlanner}{\scriptsize LLPlanner}
    \SetKwFunction{Solve}{\scriptsize Solve}
    \SetKwFunction{GetCost}{\scriptsize GetCost}
    \SetKwFunction{InitRootNode}{\scriptsize InitRootNode}
    \SetKwFunction{RemoveTime}{\scriptsize RemoveTime}
    \SetKwFunction{FindConflicts}{\scriptsize FindConflicts}
    \SetKwFunction{GetSuccessors}{\scriptsize GetSuccessors}
    \SetKwFunction{GetConstraints}{\scriptsize GetConstraints}
    \SetKwFunction{ConflictsToConstraints}{\scriptsize ConflictsToConstraints}
    \SetKwFunction{Copy}{\scriptsize Copy}
    \SetKwFunction{GetLowerBound}{\scriptsize GetLowerBound}
    
    \Input{
        $\confstart{} = \{\confstart{1}, \dots, \confstart{n} \}$ \newline
        $\confgoal{} = \{\confgoal{1}, \dots, \confgoal{n} \}$ \newline
        $f_1^H, f_2^H$: priority functions, \newline
        $w^H$: sub-optimality bound.
        }
    \Output{$\Pi = \{\pi^1, \cdots, \pi^n\}$}
    \SetAlgoLined\DontPrintSemicolon

    \SetKwFunction{procroot}{InitRootNode}
    \SetKwProg{myprocroot}{Procedure}{}{}

    \SetKwFunction{proc}{Plan}
    \SetKwProg{myproc}{Procedure}{}{}

    \vspace{5pt}
    \myprocroot{\procroot{}}{
    $N_\text{root}$.$C$ $\leftarrow$ $\emptyset$ \;
    $N_\text{root}$.$\Pi$ $\leftarrow$ invoke \LLPlanner for each agent \; 
    $N_\text{root}$.cost $\leftarrow$ $|N_\text{root}$.$\Pi|$ \;
    \KwRet $N_\text{root}$\;  
    } 
    \vspace{5pt}

    \myproc{\proc{$\confstart{}$, $\confgoal{}$}}{ 
    $N_\text{root}$ $\leftarrow$ \InitRootNode{} \;
    OPEN $\gets$ $\{N_\text{root}\}$\;
    \While{OPEN not empty}{
    $B \gets \min\limits_{N' \in \text{OPEN} } f_1^H(N') $ \;
    FOCAL $\gets \{N \in$ OPEN $\mid N.\text{cost} \leq w^H \cdot B \}$ \label{line:hl_open_lb} \; 
    $N \gets \;^{\; \arg \min}_{N' \in \text{FOCAL}} f_2^H(N')$\; \vspace{0.1cm} 

    OPEN.remove($N$)\;
    \If{$N$.conflicts $=\emptyset$}{ \label{line:check_no_conflicts}
        \KwRet $N$.$\Pi$ \label{line:hl_return_sol} \;
    }
    constraints $\leftarrow$ \GetConstraints ($N$.conflicts.first) \label{line:conflicts_to_constraints}\;
    \For{c $\in$ constraints} {  \label{line:forloop_constraints}
        New CT node $N' \gets$ \Copy($N$) \label{line:create_ct_node} \;
        \textit{$N'$}.$C$ $\leftarrow$ $N$.$C$ $\cup$ c \label{line:add_constraint_to_new_ct_node} \;
        $i \gets c.\text{agent\_id}$ \;
        \textit{Experience} $\leftarrow$ $N$.$\Pi[i]$ \label{line:exp_rem_time} \;
        $N'.\Pi[i]$ $\gets$ \LLPlanner.\Solve(\newline $\confstart{i}$, $\confgoal{i}$, $N'$.$C$, \textit{Experience}) \label{line:low_level_replan}\;
        \tcp*{Invoke \LLPlanner for each agent involved}
        \textit{$N'$}.cost $\leftarrow$ $|N'.\Pi|$\;
        $N'$.conflicts $\leftarrow$ \FindConflicts($N'$.$\Pi$) \; 
        OPEN.insert($N'$)\;
        }
    }
    \KwRet $\emptyset$ \;
    }
\end{algorithm}

\begin{algorithm}[!t]
    \caption{xWA$^*$: Low-Level (LL) Planner}
    \label{alg:low-level}
    \footnotesize
    \SetKwInOut{Input}{Input}
    \SetKwInOut{Output}{Output}
    \SetKwFunction{IsEdgeValid}{\scriptsize IsEdgeValid}
    \SetKwFunction{IsStateValid}{\scriptsize IsStateValid}
    \SetKwFunction{suffix}{\scriptsize suffix}
    \SetKwFunction{RemoveTime}{\scriptsize RemoveTime}
    \SetKwFunction{PropagagteTimeAndCost}{\scriptsize PropagagteTimeAndCost}
    \SetKwFunction{TryInsertOrUpdate}{\scriptsize TryInsertOrUpdate}
    \SetKwFunction{PushPartialExperience}{\scriptsize PushPartialExperience}
    \SetKwFunction{ExtractPath}{\scriptsize ExtractPath}
    \SetKwFunction{NoFutureConstraints}{\scriptsize NoFutureConstraints}
    \SetKwFunction{IsGoalCondition}{\scriptsize IsGoalCondition}
    \SetKwFunction{GetSuccessors}{\scriptsize GetSuccessors}

    \Input{$q_{start}$: start ($q_{start} \in \mathcal{Q}^{free}$), \newline
    $q_{goal}$: goal ($q_{goal} \in \mathcal{Q}^{free}$), \newline
    $C$: constraints set, \newline
    $\tilde{\pi}$: experience sequence (without time), \newline
    $w_1^L, w_2^L$: sub-optimality bound in WA*, focal list, \newline
    $f_1^L := g(s) + w_1^L h(s), f_2^L$: priority functions.}
    \Output{Path $\pi$}
    \SetAlgoLined\DontPrintSemicolon

    \SetKwFunction{procupd}{TryInsertOrUpdate}
    \SetKwProg{myprocupd}{Procedure}{}{}
    \vspace{5pt}
    \myprocupd{\procupd{$s_1$, $s_2$, OPEN}}{
        \If{$s_2$ was not visited before}{$g(s_2) \gets \infty$\;}
            \If{$g(s_2) > g(s_1) + \text{cost}(s_1, s_2)$}{
                $g(s_2) \gets g(s_1) + \text{cost}(s_1, s_2)$ \;
                OPEN.InsertOrUpdate($s_2$)\;
        }
    }

    \SetKwFunction{procexp}{PushPartialExperience}
    \SetKwProg{myprocexp}{Procedure}{}{}
    \vspace{5pt}
    \myprocexp{\procexp{$\tilde{\pi}$,$s$,$C$,OPEN}}{
    $(q_0,t_0) \gets s$\;
    $\hat{\pi} \gets \tilde{\pi}.\text{\suffix}(q_0)$; 
    \tcp*{The experience configurations after $q_0$.}
    \For{$\hat q \in \hat{\pi}$}{
        $\hat{s} \gets (\hat q, t_0+1)$ \label{line:xwa_propagate_time_values_from_s}\;
        \eIf{\IsEdgeValid($(q_0,t_0)$, $\hat{s}$,$C$) $\wedge$ \IsStateValid($\hat{s}$,$C$) \label{line:xwa_add_from_exp_if}}{
            \TryInsertOrUpdate($(q_0,t_0)$, $\hat{s}$, OPEN) \label{line:xwa_insert_or_update_open}\;
            $(q_0, t_0) \leftarrow \hat{s}$\;
            }{break\;}
        }
    } 
    \vspace{5pt}
    
    \SetKwFunction{procrefine}{Solve}
    \SetKwProg{myproc}{Procedure}{}{}
    \myproc{\procrefine{$q_{start}$, $q_{goal}, C$, $\tilde{\pi}$}}{
        $s_\text{root} \leftarrow (q_{start}, 0)$ \tcp*{Adding time to state}
        OPEN $\gets$ \{$s_\text{root}$\} \; 
        \PushPartialExperience{$\tilde{\pi}$, $s_\text{root}$, $C$, OPEN} \label{line:xwa_push_partial_exp_start}\;
        
        \While{OPEN $\neq \emptyset$}{
            FOCAL $\gets \{s \in OPEN \mid f_1^L (s) \leq w_2^L \min\limits_{s' \in \text{OPEN} } f_1^L(s') \}$ \; 
            \vspace{-0.25cm} 
            $s = (\conf{}{}, t ) \gets \;^{\; \arg \min}_{s' \in \text{FOCAL}} f_2^L(s')$ \label{line:s_min_focal}\; \vspace{0.05cm} 
            OPEN.remove($s$)\;
            \If{$\conf{}{} = \confgoal{}$ $\wedge$ no future constraints at $\confgoal{}$
            \label{line:wxa_s_check_goal_equiv}
            }
            { 
                \KwRet $\pi \leftarrow \ExtractPath(s)$ \label{line:xwa_return_pi} \;
            }
            \If{$\conf{}{} \in \tilde{\pi}$}{ \label{line:xwa_s_in_exp_check}
                \PushPartialExperience{$\tilde{\pi}$, $s$, C,  OPEN} \label{line:xwa_push_partial_exp}\;
            }
            
            \For{$s' \in \GetSuccessors(s, C)$\label{line:xwa_get_successors}} {
                \TryInsertOrUpdate{$s$, $s'$, OPEN} \label{line:xwa_insert_s_to_open} \;
            }
        }
    \KwRet $\emptyset$ \label{line:xwa_return_empty}\;
    }
\end{algorithm}
Our main contribution in this work is an experience-acceleration framework for CBS-based algorithms.
We instantiate this framework in two incarnations, \textit{xCBS} and \textit{xECBS}, accelerating CBS and ECBS, respectively. In this section, we present the general form of our acceleration method in an intuitive manner grounded by Algorithm \ref{alg:high-level} and Algorithm \ref{alg:low-level} and then provide a theoretical analysis of its performance alongside its instantiations xCBS and xECBS. 

\subsection{Experience-Acceleration Framework}
\label{sec:alg_app_xcbs}

Our framework follows the CBS structure and informs new low-level planner calls with the experience generated in previous search efforts.
In the high-level search (Alg. \ref{alg:high-level}), each CT node contains a set of paths $\Pi$, one for each agent, and a set of constraints $C$. Upon obtaining a new node from a priority queue, it is checked for conflicts (line \ref{line:check_no_conflicts}). If none exist, the node is a goal node, and the paths are returned (line \ref{line:hl_return_sol}). Otherwise, a pair of constraints is derived from the first conflict (line \ref{line:conflicts_to_constraints}). Usually, CBS proceeds by creating a new CT node, one with an added constraint from the constraint set (lines \ref{line:forloop_constraints}-\ref{line:add_constraint_to_new_ct_node}), and replans a single-agent path for the affected agent from scratch (line \ref{line:low_level_replan}). However, we recognize that a considerable portion of the previously generated path remains valid and can be effectively reused. Thus, our high-level search caches a copy of the previously computed path as \textit{experience} and passes it to the low-level motion planner (lines \ref{line:exp_rem_time}, \ref{line:low_level_replan}). The experience path is a sequence of configurations (including waits and cycles). 
It is possible to construct the experience set for a replanned agent $\robot{i}$ in multiple ways. We have experimented with reusing its previous path stored in its parent CT node, paths for $\robot{i}$ from all previous searches on the CT branch, and all paths for $\robot{i}$ across the CT. Reusing the previous path performed the best.

The low-level of our acceleration framework, namely \textit{xWA$^*$}, is detailed in Algorithm \ref{alg:low-level} and illustrated in Fig. \ref{fig:low-level}. Let a \textit{state} be a configuration with time. Each state expansion (lines \ref{line:s_min_focal}-\ref{line:xwa_insert_s_to_open}) adds a set of successors to the OPEN list. 
Upon a choice of a state for expansion (line \ref{line:s_min_focal}), the search terminates if it is a goal state (lines \ref{line:wxa_s_check_goal_equiv}-\ref{line:xwa_return_pi}). Otherwise, we check if the expanded state belongs to the experience path (line \ref{line:xwa_s_in_exp_check}). 
If the expanded state belongs to an experience, 
starting from that state, we aim to add as much of the experience as possible to the OPEN list (line \ref{line:xwa_push_partial_exp}). This process is also applied to the start state (line \ref{line:xwa_push_partial_exp_start}) and essentially provides a ``warm start'' to the search effort.
Given an expanded state $s$ that belongs to an experience, we attempt to add consecutive states from the experience while propagating their associated time (line \ref{line:xwa_propagate_time_values_from_s}) and setting or updating their associated cost (line \ref{line:xwa_insert_or_update_open}). We continue this process until a \textit{termination condition} is met (line \ref{line:xwa_add_from_exp_if}). 
A simple condition terminates addition when it violates constraints in $C$. A more complex condition, for example, terminates the addition of states when transitions lead to a collision with another agent's path. This condition is effective for xECBS.

The effect of adding an experience path to the OPEN list of a bounded sub-optimal search algorithm, such as weighted A*, could be a rapid exploration of states that are closer to the end of the experience path (and, consequently, closer to the goal). Fig. \ref{fig:low-level} illustrates this effect. Such exploration results in the algorithm ``jumping" over previously explored regions and avoiding redundant search efforts, directing its focus closer to the end of the experience.

Collision checking against the static environment, a significant factor in the slowness of planning for manipulation, can also be directly accelerated with experience. To this end, our acceleration framework also keeps track of the configurations ($\conf{i}{t}, \conf{i}{t+1})$ in all valid transitions $(s_t, s_{t+1})$ for each robot $\robot{i}$. With this information, the successors set (line \ref{line:xwa_get_successors}) can be computed more rapidly by only checking the validity of edges previously unseen. 
Because one single-agent search can revisit the same configuration at different times, such experience reuse also speeds up the first search.

\subsection{Theoretical Analysis}
In this section, we discuss the theoretical foundation of our algorithm.
We formally define the problem for both levels of CBS as focal search and introduce some of the properties of CBS and its bounded sub-optimal variants.
We show that accelerating CBS variants by reusing experience retains completeness and bounded sub-optimality guarantees.

We commence by establishing the bounded sub-optimality of the low-level planner xWA$^*$ that leverages past experiences. To allow for the use of inflated heuristics using a weighted OPEN list \cite{Rishi}, which is common in manipulation, we expand our analysis to low-level planners with $w_1$\textit{-admissible} \cite{pearl1982astarepsilon} priority function $f_1(s) = g(s) + w_1 h(s)$.

\begin{lem}
\label{lem:eps-admissible}
    A focal search employing a \textit{$w_1$-admissible} function $f_1(s)$ ($w_1 \geq 1$) and $\focal=\{s \in \text{OPEN} \mid f_1(s) \leq w_2 \min\limits_{s' \in OPEN} f_1(s')\}$ has a sub-optimality factor $w_1 \cdot w_2$.
\end{lem}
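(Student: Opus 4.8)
The plan is to reduce the claim to the classical bounded-suboptimality argument for focal search (A$^*_\epsilon$), adapted to the inflated priority $f_1(s) = g(s) + w_1 h(s)$, where I interpret $w_1$-admissibility as saying the underlying $h$ is admissible, i.e.\ $h(s) \le h^*(s)$. Write $C^*$ for the optimal cost from $q_{start}$ to $q_{goal}$ and let $P^* = (s_0, s_1, \dots, s_k)$ be an optimal path, so that $g^*(s_i) + h^*(s_i) = C^*$ for every node $s_i$ on $P^*$. The goal is to show the path extracted on line \ref{line:xwa_return_pi} has cost at most $w_1 w_2 C^*$. Since the returned cost equals $g$ of the selected goal state and $h(q_{goal}) = 0$, it suffices to bound $f_1$ of the state chosen from $\focal$ in the terminating iteration.

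The central step is an invariant: at the start of every iteration of the while loop before $q_{goal}$ is extracted, $\open$ contains a node $n'$ lying on $P^*$ with $g(n') = g^*(n')$. Granting this, admissibility of $h$ (so $h(n') \le h^*(n')$) together with $w_1 \ge 1$ yields
\begin{equation*}
f_1(n') = g^*(n') + w_1 h(n') \le g^*(n') + w_1 h^*(n') \le w_1\bigl(g^*(n') + h^*(n')\bigr) = w_1 C^*,
\end{equation*}
where the last inequality uses $g^*(n') \le w_1 g^*(n')$. Hence $\min_{s' \in \open} f_1(s') \le f_1(n') \le w_1 C^*$ at every such iteration.

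It then remains to combine this with the focal rule. The state $s$ selected for expansion on line \ref{line:s_min_focal} is drawn from $\focal = \{s \in \open \mid f_1(s) \le w_2 \min_{s'\in\open} f_1(s')\}$, so $f_1(s) \le w_2 \min_{s'\in\open} f_1(s') \le w_1 w_2 C^*$. Applying this at the iteration in which the goal state is selected and returned (line \ref{line:wxa_s_check_goal_equiv}), and using $h(q_{goal}) = 0$, gives $g(q_{goal}) = f_1(\text{goal}) \le w_1 w_2 C^*$, which is exactly the claimed $w_1 w_2$ suboptimality of the extracted path.

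I expect the main obstacle to be a careful proof of the invariant, i.e.\ the perpetual existence of a frontier node $n'$ on $P^*$ with $g(n') = g^*(n')$. The subtlety is that $h$ is assumed admissible but not necessarily consistent, so a node may first be reached sub-optimally and only later improved; this is precisely why I would appeal to the fact that the InsertOrUpdate step inside \texttt{TryInsertOrUpdate} permits re-insertion and relaxation of states already in $\open$. Taking $n'$ to be the shallowest node of $P^*$ currently in $\open$, all of its $P^*$-predecessors have already been expanded, and the update mechanism guarantees their $g$-values — and hence $g(n')$ — have converged to $g^*$ before $n'$ itself is expanded. One should also verify the two edge cases that leave the bound intact: the warm-start insertions of experience states via \texttt{PushPartialExperience} (line \ref{line:xwa_push_partial_exp_start}) only add valid, graph-reachable states, so they never corrupt the $g^*$-optimality of the frontier node, and the ``no future constraints'' guard in the goal test on line \ref{line:wxa_s_check_goal_equiv} ensures the extracted goal state is genuinely feasible.
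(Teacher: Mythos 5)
Your proof is correct and takes essentially the same route as the paper's: both arguments bound every expanded node via the chain $f_1(s) \leq w_2 \min_{s' \in \open} f_1(s') \leq w_2 f_1(n') = w_2\bigl(g(n') + w_1 h(n')\bigr) \leq w_1 w_2\bigl(g(n') + h(n')\bigr) \leq w_1 w_2 C^*$, where $n'$ is a node on an optimal path residing in OPEN. If anything, you are more careful than the paper, whose one-line proof silently assumes such a node exists with $g(n') + h(n') \leq C^*$, whereas you state the needed invariant ($g(n') = g^*(n')$ for some optimal-path node in OPEN, preserved by the relaxation/re-insertion in \texttt{TryInsertOrUpdate}) explicitly and flag it as the step requiring proof.
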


\begin{proof}
    Let $s_0$ be a node on an optimal path that resides in OPEN. For every expanded node $s'$:
    \[f_1(s') \leq w_2 \min\limits_{s \in \open}f_1(s) \leq w_2 f_1(s_0) =\]
    \[w_2(g(s_0) + w_1 h(s_0)) \leq w_2 w_1 (g(s_0) + h(s_0)) \leq w_2 w_1 C^*\qedhere\]
\end{proof}

\noindent Next, we show that incorporating experiences in xWA$^*$ neither impacts its sub-optimality nor sacrifices completeness.

\begin{lem}
\label{lem:adding-nodes}
    Consider a best-first search storing frontier states in an OPEN list.
    When systematically incorporating successors into OPEN, if additional nodes are introduced along with their associated priority function values, completeness and bounded sub-optimality persist.
\end{lem}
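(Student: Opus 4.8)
The plan is to reduce the claim to the invariants that already drive the proof of Lemma~\ref{lem:eps-admissible}, and then argue that injecting extra nodes with their correct priority values leaves those invariants intact. Concretely, the completeness and bounded sub-optimality of a focal search rest on two facts about the contents of $\open$: (i) the $g$-value stored with every state is the cost of an \emph{actual} path from the start to that state, maintained through the relaxation rule so that $g$ only ever decreases and never drops below the true cost-to-come $g^*$, which makes $f_1(s)=g(s)+w_1 h(s)$ a valid priority; and (ii) before the goal is expanded, $\open$ always contains a witness $s_0$ lying on an optimal start--goal path with $g(s_0)=g^*(s_0)$, so that $f_1(s_0)=g^*(s_0)+w_1 h(s_0)\le w_1 C^*$ and hence $\min_{s\in\open}f_1(s)\le w_1 C^*$.

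First I would verify that the experience injections preserve invariant (i). Each injected state is a genuine reachable configuration, and its associated priority is derived from the $g$-value accumulated along the experience path, which is itself a real path from the start; moreover the same update test used for ordinary successors (\texttt{TryInsertOrUpdate}) is applied, so no state is ever assigned a $g$-value below $g^*$. Thus every node in $\open$, whether placed there by expansion or by \texttt{PushPartialExperience}, continues to carry a valid priority, and $g(\text{goal})$ remains a real solution cost.

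Next I would show that invariant (ii) survives. The only ways node additions could harm the search are by removing the witness $s_0$ or by blocking its generation, but we never delete a node from $\open$ except upon expanding it, and additions can only lower existing $g$-values, never raise them. Hence the standard A$^*$ argument still applies: along any optimal path there is a first unexpanded node whose predecessors have all been expanded with correct $g$, and relaxing the edge into it places it in $\open$ with $g=g^*$. Consequently $\min_{s\in\open}f_1(s)\le w_1 C^*$ throughout, and when the goal is finally drawn from $\focal$ it satisfies $f_1(\text{goal})=g(\text{goal})\le w_2\min_{s\in\open}f_1(s)\le w_1 w_2 C^*$, exactly the inequality chain of Lemma~\ref{lem:eps-admissible}; so the returned path is $w_1 w_2$-sub-optimal. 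Completeness follows from the same two invariants: since valid alternatives are never discarded and the witness $s_0$ always persists, if a solution exists the goal is eventually selected and returned.

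I expect the main obstacle to be pinning down invariant (ii) rigorously, in particular ruling out that an injection causes termination on a suboptimal goal before the witness can be expanded. The resolution is precisely the focal bound itself: termination at the goal is governed by $\min_{s\in\open}f_1(s)\le w_1 C^*$, so any expanded goal already respects the $w_1 w_2 C^*$ bound and no separate argument about expansion order is needed. A secondary point I would state carefully is that the injected priorities must be the values the relaxation rule would assign (genuine path costs), not arbitrary labels; it is this consistency, together with the add-only (never-remove) discipline, that makes the extra nodes provably harmless.
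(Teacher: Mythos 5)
Your proposal is correct, and it reaches the conclusion by a more direct route than the paper. The paper's own proof is a two-sentence comparison argument: it observes that the OPEN list of the unmodified weighted A* search is a subset of the modified OPEN (injections only add, never remove), invokes this to claim the search stays systematic and hence complete, and then notes that FOCAL is still populated only by nodes satisfying $f_1(s) \leq w_2 \min_{s' \in \text{OPEN}} f_1(s')$, so any expanded goal respects the bound. You instead verify, on the modified search itself, the two invariants that actually power the inequality chain of Lemma~\ref{lem:eps-admissible}: that every $g$-value in OPEN is a genuine path cost (because injected states go through the same relaxation rule, \texttt{TryInsertOrUpdate}), and that a witness node $s_0$ on an optimal path with $g(s_0)=g^*(s_0)$ always survives in OPEN. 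This buys you two things the paper leaves implicit. First, the paper's subset claim is delicate once an injection changes the expansion order --- the two searches diverge, so ``the original OPEN'' is no longer literally a subset of the modified one at corresponding iterations; your invariant argument sidesteps that comparison entirely. Second, the paper's FOCAL remark only bounds expansions relative to $\min f_1$ over OPEN, and connecting that to $C^*$ requires exactly the witness invariant you make explicit. You also correctly isolate the precondition that the lemma's phrase ``along with their associated priority function values'' must mean \emph{genuine path costs} produced by relaxation, not arbitrary labels --- without that, the claim would be false. The cost of your approach is length; the paper's version is an acceptable compression precisely because Lemma~\ref{lem:eps-admissible} already carries the witness argument it leans on.
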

\begin{proof}
    When introducing new nodes to OPEN, the original OPEN of weighted A* becomes a subset of the modified OPEN. Thus, the algorithm maintains its systematic nature, ensuring completeness.
    Furthermore, we also know that FOCAL will only be populated by nodes from OPEN that are within the specified sub-optimality bound.
    Consequently, when a goal state is expanded, the solution remains bounded sub-optimal.
\end{proof}

\begin{thm}
\label{thm:low-level-complete-bdd-suboptimal}
    xWA$^*$ is complete and bounded sub-optimal by a factor of $w^L = w_1^L w_2^L$.
\end{thm}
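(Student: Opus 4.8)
The plan is to obtain the result as a direct composition of Lemma \ref{lem:eps-admissible} and Lemma \ref{lem:adding-nodes}, after recognizing that xWA$^*$ is nothing but an ordinary focal search whose OPEN list is occasionally augmented with experience states. First I would isolate the vanilla focal-search skeleton inside Algorithm \ref{alg:low-level}: the priority function is $f_1^L(s) = g(s) + w_1^L h(s)$, which is $w_1^L$-admissible whenever $h$ is admissible, and FOCAL is built as $\{s \in \text{OPEN} \mid f_1^L(s) \le w_2^L \min_{s' \in \text{OPEN}} f_1^L(s')\}$. Deleting the two calls to \texttt{PushPartialExperience} (line \ref{line:xwa_push_partial_exp_start} and line \ref{line:xwa_push_partial_exp}) leaves exactly the focal search of Lemma \ref{lem:eps-admissible}, which is a systematic best-first search — hence complete — and, by that lemma, $w_1^L w_2^L$-suboptimal.

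Next I would argue that the experience machinery performs only the operation covered by Lemma \ref{lem:adding-nodes}, namely inserting extra nodes into OPEN together with their correct priority values. The key point to verify is that \texttt{PushPartialExperience} adds states through \texttt{TryInsertOrUpdate}, which assigns $g(\hat s) = g(q_0,t_0) + \text{cost}((q_0,t_0),\hat s)$ along the experience chain; hence every inserted state carries a genuine path cost, and therefore a genuine $f_1^L$ value, rather than an optimistic or fabricated one. I would also note that each experience state is admitted only after passing \texttt{IsEdgeValid} and \texttt{IsStateValid} against the constraint set $C$, so no constraint-violating state enters OPEN and the recovered path stays feasible. Consequently the OPEN list of the plain focal search is always a subset of the OPEN list maintained by xWA$^*$, and the hypotheses of Lemma \ref{lem:adding-nodes} are satisfied.

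Finally I would invoke Lemma \ref{lem:adding-nodes} to conclude that injecting these extra, correctly-priced nodes preserves both the systematic (hence complete) character of the search and the bounded-suboptimality guarantee inherited from Lemma \ref{lem:eps-admissible}. Since that guarantee is the factor $w_1^L w_2^L$ and completeness is untouched, xWA$^*$ is complete and bounded suboptimal by $w^L = w_1^L w_2^L$, as claimed.

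I expect the only genuine obstacle to lie in the middle step: certifying that the experience insertions really fall under Lemma \ref{lem:adding-nodes} rather than quietly altering the search dynamics. Concretely, I would have to confirm that (i) the $g$-values produced along the experience chain are valid reachable-cost values, so that $f_1^L$ remains $w_1^L$-admissible in the sense Lemma \ref{lem:eps-admissible} uses, and (ii) no experience state bypasses the validity checks, since an invalid state slipping into OPEN could break feasibility of the returned path and also destroy the subset relation underpinning the completeness argument. Everything past these two checks is routine bookkeeping layered on top of the two lemmas.
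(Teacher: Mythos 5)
Your proposal is correct and takes essentially the same route as the paper: the paper's own proof is exactly the composition of Lemma~\ref{lem:eps-admissible} (the weighted-OPEN focal search is $w_1^L w_2^L$-suboptimal) with Lemma~\ref{lem:adding-nodes} (experience insertions are just extra nodes added to OPEN with their priority values, preserving completeness and the bound). Your middle step---explicitly checking that \texttt{PushPartialExperience} assigns genuine $g$-values via \texttt{TryInsertOrUpdate} and only admits states passing the validity checks---is a careful verification of Lemma~\ref{lem:adding-nodes}'s hypotheses that the paper leaves implicit, but it does not change the argument.
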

\begin{proof}
    Since xWA$^*$ is a $w_2^L$-sub-optimal focal search, which employs a weighted OPEN ($w_1^L$-admissible $f_1=f_1^L$), the proof follows directly from Lemmas \ref{lem:eps-admissible} and \ref{lem:adding-nodes}.
\end{proof}

We continue with analyzing the sub-optimality of the high-level planner, which is defined as a focal search. Let $f_1 = f_1^H$ be a priority function such that for every CT node $N$, $f_1^H(N) \leq N.\text{cost}$. 
Additionally, let the FOCAL queue be defined as $FOCAL = \{N \in \text{OPEN} \mid N.\text{cost} \leq w^H \cdot  \min\limits_{N' \in \text{OPEN} } f_1^H(N')\}$:

\begin{lem}
\label{lem:subopt-cbs-bound}
    Let $w^H, w^L$ be the sub-optimality factor of the high- and low-level focal searches, respectively. For any $w^H, w^L \geq 1$, the cost of a solution is at most $w^H w^L C^*$.
\end{lem}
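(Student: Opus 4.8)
The plan is to combine the bounded sub-optimality of the high-level focal search with a per-agent accounting of the low-level error, mirroring the BCBS argument but replacing ``optimal low-level paths'' with ``$w^L$-sub-optimal low-level paths.'' Fix an optimal valid (collision-free) solution $\Pi^*$ with cost $C^* = |\Pi^*|$, and write $\pi^{*,i}$ for the path of agent $i$ in $\Pi^*$. First I would re-establish the standard CBS structural invariant: at every iteration there exists a CT node $N^*$ in $\text{OPEN}$ whose constraint set $N^*.C$ is satisfied by $\Pi^*$. This holds at the root (empty constraints) and is preserved whenever a node consistent with $\Pi^*$ is split, because resolving a conflict produces two constraints of which any valid solution---in particular $\Pi^*$---must satisfy at least one (satisfying neither would reproduce the very collision), so at least one child remains consistent with $\Pi^*$ and re-enters $\text{OPEN}$.

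Next I would bound $N^*.\text{cost}$. The key observation is that a constraint on agent $i$ only affects agent $i$'s low-level query, so the path $N^*.\Pi[i]$ was last planned under exactly agent $i$'s constraints in $N^*.C$. By Theorem \ref{thm:low-level-complete-bdd-suboptimal}, this path costs at most $w^L$ times the optimal agent-$i$ cost subject to those constraints. Since $\Pi^*$ satisfies $N^*.C$, the path $\pi^{*,i}$ is itself feasible for that constrained single-agent problem, so the constrained optimum for agent $i$ is at most $|\pi^{*,i}|$. Summing over agents gives $N^*.\text{cost} \leq w^L \sum_i |\pi^{*,i}| = w^L C^*$, and hence $f_1^H(N^*) \leq N^*.\text{cost} \leq w^L C^*$ using the standing assumption $f_1^H(N) \leq N.\text{cost}$.

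Finally I would invoke the high-level FOCAL rule. When the algorithm returns a node $N$, it is taken from $\text{FOCAL}$, so $N.\text{cost} \leq w^H \min_{N' \in \text{OPEN}} f_1^H(N')$; since $N^*$ is in $\text{OPEN}$ at that moment, $\min_{N'} f_1^H(N') \leq f_1^H(N^*) \leq w^L C^*$, yielding $N.\text{cost} \leq w^H w^L C^*$. As $N$ is conflict-free, $N.\text{cost}$ is the cost of a genuine valid solution, which gives the claim.

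I expect the main obstacle to be the second paragraph: making the per-agent bound airtight despite the fact that the agents' paths in $N^*$ were generally computed at different ancestors of $N^*$. The argument relies on the monotonicity that no agent-$i$ constraint is added between the last replanning of agent $i$ and $N^*$ (otherwise agent $i$ would have been replanned), so the stored path for agent $i$ is $w^L$-sub-optimal with respect to the full set $N^*.C$ and not merely some earlier subset. The preservation of the invariant under conflict splitting, which also underlies completeness, is the other place to argue carefully.
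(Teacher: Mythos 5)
Your proposal is correct, and its skeleton is the same as the paper's: membership of the returned node in the high-level FOCAL, the standing assumption $f_1^H(N) \leq N.\text{cost}$, and a per-agent application of the low-level bound from Theorem~\ref{thm:low-level-complete-bdd-suboptimal}. The useful difference is that you justify a step the paper's proof leaves implicit. The paper's chain $N.\text{cost} \leq w^H \min_{N' \in \text{OPEN}} \sum_{i} cost(i) \leq w^H \sum_{i} w^L C^{*,i}$ needs a specific witness node in OPEN whose stored paths have costs within $w^L$ of per-agent benchmarks summing to at most $C^*$; for an arbitrary node of OPEN this is false, since its constrained paths can be arbitrarily more expensive than any fixed benchmark. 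Your second paragraph supplies exactly this witness: the standard CBS invariant that some node $N^*$ whose constraints are satisfied by an optimal valid solution $\Pi^*$ survives in OPEN, combined with the observation that each stored path of $N^*$ is $w^L$-sub-optimal for the \emph{constrained} single-agent problem, whose optimum is at most $|\pi^{*,i}|$ because $\pi^{*,i}$ is feasible for it. (Relatedly, under the paper's reading of $C^{*,i}$ as single-agent optima, its final ``$= w^H w^L C^*$'' should really be ``$\leq$''; your accounting gets the direction right.) Your two closing caveats---that no agent-$i$ constraint postdates agent $i$'s last replanning in $N^*$, and that the constraint pair generated at a split cannot be violated simultaneously by any valid solution---are precisely the conditions under which the invariant holds, and both are satisfied by the algorithms considered (the latter is what the paper's footnote calls constraint validity and assumes in Theorem~\ref{thm:xcbs}).
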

\begin{proof}
    Let $N$ be a node in FOCAL of the high-level search. For each of the $\robot{i}$ of $n$ agents, we denote the returned cost of a low-level plan as $cost(i)$ and its optimal cost as $C^{*,i}$. 
        \[N.\text{cost} \leq w^H \min\limits_{N' \in \open}f_1^H(N')\leq w^H \min\limits_{N' \in \open}\sum\limits_{i=1}^{n}cost(i) \]
        \[\leq w^H \sum\limits_{i=1}^{n} w^L C^{*,i} = w^H w^L C^*\qedhere\]
\end{proof}

\begin{thm}
    Our proposed acceleration framework is complete and bounded sub-optimal.
    \label{thm:xcbs}
\end{thm}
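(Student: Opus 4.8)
The plan is to assemble Theorem~\ref{thm:xcbs} directly from the pieces already established, treating it as the capstone that glues together the low-level and high-level guarantees. The key observation is that the framework is nothing more than the CBS high-level search (Alg.~\ref{alg:high-level}) wrapped around the experience-accelerated low-level planner xWA$^*$ (Alg.~\ref{alg:low-level}), so the proof should reduce the claim to the previously proved facts about each level rather than re-deriving anything from first principles. Concretely, I would first invoke Theorem~\ref{thm:low-level-complete-bdd-suboptimal} to assert that each low-level call returns a path that is complete and bounded sub-optimal by $w^L = w_1^L w_2^L$, meaning $cost(i) \leq w^L C^{*,i}$ for every agent $\robot{i}$. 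This is the hypothesis that Lemma~\ref{lem:subopt-cbs-bound} consumes, so the two results chain together cleanly.

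Next I would address bounded sub-optimality of the whole framework by appealing to Lemma~\ref{lem:subopt-cbs-bound}: given that the low-level is $w^L$-bounded and the high-level is a focal search with the stated $f_1^H$ and FOCAL definition, the returned solution satisfies $N.\text{cost} \leq w^H w^L C^*$. I would emphasize that the experience reuse in xWA$^*$ does not alter the returned cost bound --- this is exactly the content of Lemma~\ref{lem:adding-nodes}, which guarantees that injecting extra nodes (the experience prefixes/segments) into OPEN leaves the sub-optimality factor intact because FOCAL is still populated only by nodes within the bound. So I would instantiate both incarnations, noting that xCBS corresponds to $w^L = w^H = 1$ (recovering completeness and optimality) while xECBS uses $w \geq 1$ and yields a single combined bound.

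For completeness, the argument is that the high-level CBS search structure is unchanged: it still splits on the first conflict into two children adding constraints $c_1, c_2$ that cannot both be invalidated by any valid solution (the footnote condition in Section~\ref{prelims:cbs}), and since each low-level planner is complete by Theorem~\ref{thm:low-level-complete-bdd-suboptimal}, no valid solution is ever erroneously pruned. The experience acceleration only adds nodes to the low-level OPEN (Lemma~\ref{lem:adding-nodes}), preserving the systematic nature of the search and hence completeness at both levels. I would close by observing that the termination conditions on experience insertion (line~\ref{line:xwa_add_from_exp_if}) only ever \emph{stop} adding experience nodes and never remove genuine successors, so the standard CBS completeness argument carries through verbatim.

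The main obstacle I anticipate is not any single calculation but rather making the completeness claim airtight: I must argue carefully that the modified low-level never discards a reachable goal state despite the experience-based ``jumping,'' and that the high-level's guarantee that $c_1, c_2$ jointly preserve at least one valid solution is unaffected by the acceleration. Since Lemma~\ref{lem:adding-nodes} already isolates the crucial fact that OPEN of plain weighted A$^*$ is a subset of the accelerated OPEN, the risk is merely in citing it at the right granularity rather than in proving something new. The bounded sub-optimality half is essentially immediate once Lemma~\ref{lem:subopt-cbs-bound} is in hand, so I would keep that portion short and spend the bulk of the prose tying completeness back to the unchanged CBS branching logic.
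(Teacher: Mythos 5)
Your proposal is correct and takes essentially the same route as the paper: cite Theorem~\ref{thm:low-level-complete-bdd-suboptimal} for completeness and $w^L$-sub-optimality of xWA$^*$, inherit high-level completeness and the $w^H$ factor from the CBS/ECBS structure of \citet{barer2014suboptimal}, and chain the two via Lemma~\ref{lem:subopt-cbs-bound} to get the $w^H w^L C^*$ bound; your extra care on completeness (the constraint-pair condition and Lemma~\ref{lem:adding-nodes}) is just a more explicit rendering of what the paper delegates to the cited work. One side remark is inaccurate, though: xCBS does \emph{not} correspond to $w^L = w^H = 1$ and does not recover optimality --- in the paper it drops only the focal lists ($w^H = w_2^L = 1$) but retains a weighted OPEN with $w_1^L$-admissible $f_1^L$, so its sub-optimality factor is $w_1^L$, which is $50$ in the experiments; this does not affect the validity of your proof of the theorem itself, which concerns the general framework.
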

\begin{proof}

    Building on the work of \citet{barer2014suboptimal}, we can establish that the high-level search is complete if the low-level planner is complete and the constraints are valid. Furthermore, it is bounded sub-optimal with a factor of $w^H$. As shown in Theorem~\ref{thm:low-level-complete-bdd-suboptimal}, xWA$^*$ is both complete and bounded sub-optimal by $w^L$. 
    By transitivity, our overall algorithm inherits completeness from both the high and low levels. Additionally, leveraging Lemma~\ref{lem:subopt-cbs-bound}, we conclude that the sub-optimality upper bound of our approach is $w^Hw^LC^*.\qedhere$
\end{proof}

Finally, we detail xCBS and xECBS as instances of our experience-acceleration framework and show their completeness and bounded sub-optimality.

\subsubsection*{\textbf{xCBS}} At the low- and high-level, xCBS does not use focal lists (i.e., $f_2^L = f_1^L, f_2^H = f_1^H, \; w^H=w^L_2 = 1$). Its CT node prioritization is identical to that of CBS ($f_1^H(N) = N.\text{cost}$), and so is its constraint generation function. Hence, its low- and high-levels are complete. By Theorem \ref{thm:low-level-complete-bdd-suboptimal}, it has a sub-optimality factor of $w^L = w_1^L$.
Thus, xCBS maintains completeness and is bounded sub-optimal by factor $w_1^L$.

\subsubsection*{\textbf{xECBS}} 
The low- and high-level focal lists are ordered similarly to ECBS, prioritizing nodes with fewer conflicts.
At the high-level, xECBS uses $f_1^H(N) = LB(N)$, keeping the sub-optimality factor $w^H=1$. At the low level, xWA* contributes a sub-optimality factor of  $w^L = w_1^L \cdot w_2^L$. Thus, its total sub-optimality factor is $ w_1^L \cdot w_2^L$.
Completeness is guaranteed for the same reasons as xCBS.

\begin{figure*}[t]
  \begin{minipage}{0.64\textwidth}
    \centering
      \includegraphics[height=4.5cm]{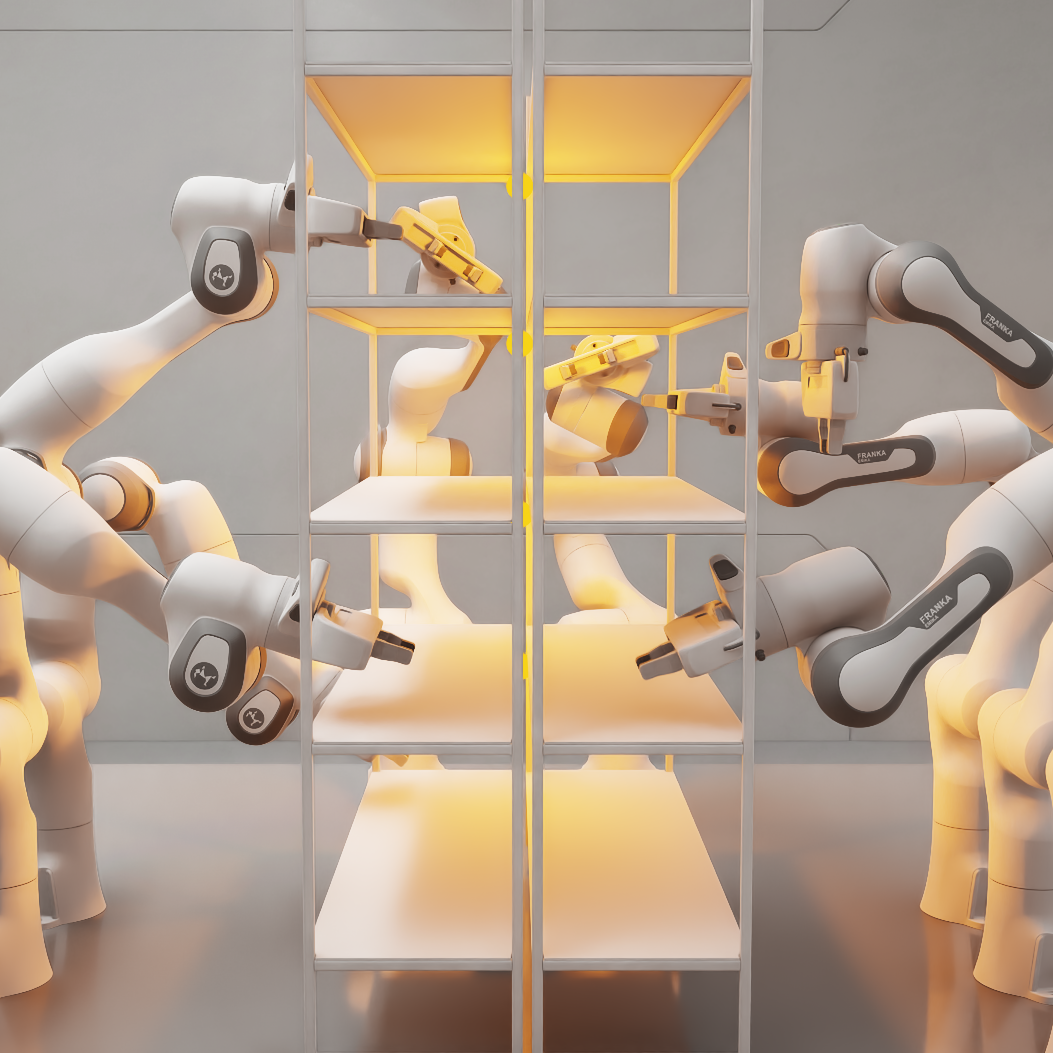}
      \hspace{0.3cm}
      \includegraphics[height=4.6cm, width=6.2cm]{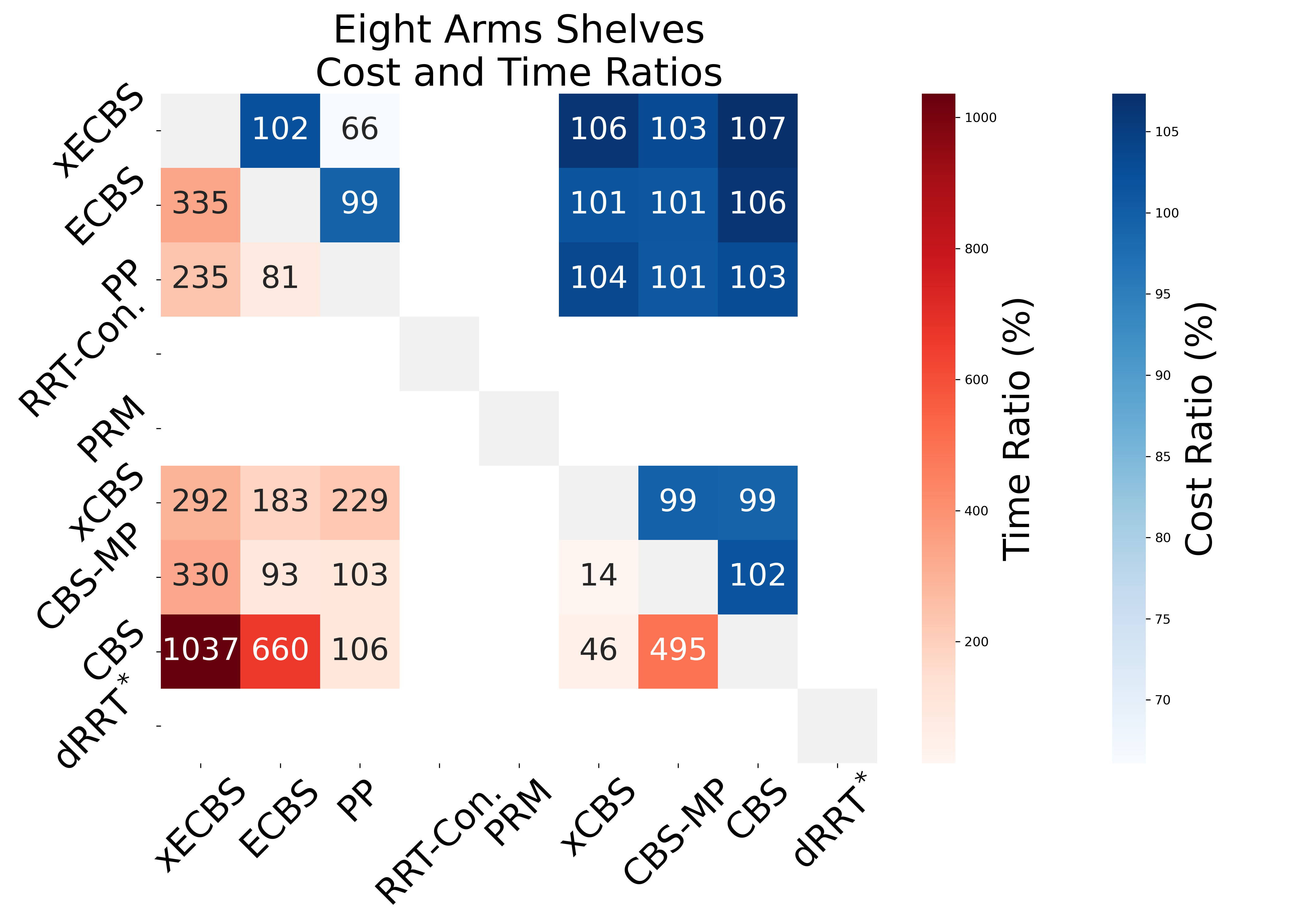}
  \end{minipage}
\hspace{-0.5cm}
\begin{minipage}{0.36\textwidth}
\resizebox{\textwidth}{!}{%
\begin{tabular}{|l|c|c|c|}
\hline
\multicolumn{1}{|c|}{}        & \cellcolor[HTML]{C0C0C0}Success & \cellcolor[HTML]{C0C0C0}Planning Time (sec)            & \cellcolor[HTML]{C0C0C0}Cost (rad) \\ \hline
\rowcolor[HTML]{EFEFEF} 
\cellcolor[HTML]{C0C0C0}xECBS & \textbf{84\%}                   & \cellcolor[HTML]{EFEFEF}\textbf{13.6 $\pm$ 12.1} & \textbf{41.9 $\pm$ 8.3}            \\ \hline
\cellcolor[HTML]{C0C0C0}ECBS     & 40\% & 26.9 $\pm$ 17.9 & 37.1 $\pm$ 6.3  \\ \hline
\cellcolor[HTML]{C0C0C0}PP       & 40\% & 30.0 $\pm$ 19.7 & 58.1 $\pm$ 50.6 \\ \hline
\cellcolor[HTML]{C0C0C0}RRT-Con. & x    & x               & x               \\ \hline
\cellcolor[HTML]{C0C0C0}PRM      & x    & x               & x               \\ \hline
\cellcolor[HTML]{C0C0C0}xCBS     & 4\%  & 34.8 $\pm$ 25.4 & 41.5 $\pm$ 8.9  \\ \hline
\cellcolor[HTML]{C0C0C0}CBS      & 4\%  & 39.8 $\pm$ 21.9 & 32.1 $\pm$ 4.9  \\ \hline
\cellcolor[HTML]{C0C0C0}CBS-MP   & 22\% & 25.7 $\pm$ 20.0 & 38.0 $\pm$ 4.6  \\ \hline
\cellcolor[HTML]{C0C0C0}dRRT*    & x    & x               & x               \\ \hline
\end{tabular}}
\end{minipage}

\hspace{0.2cm}

\begin{minipage}{0.64\textwidth}
\centering
  \includegraphics[width=4.5cm]{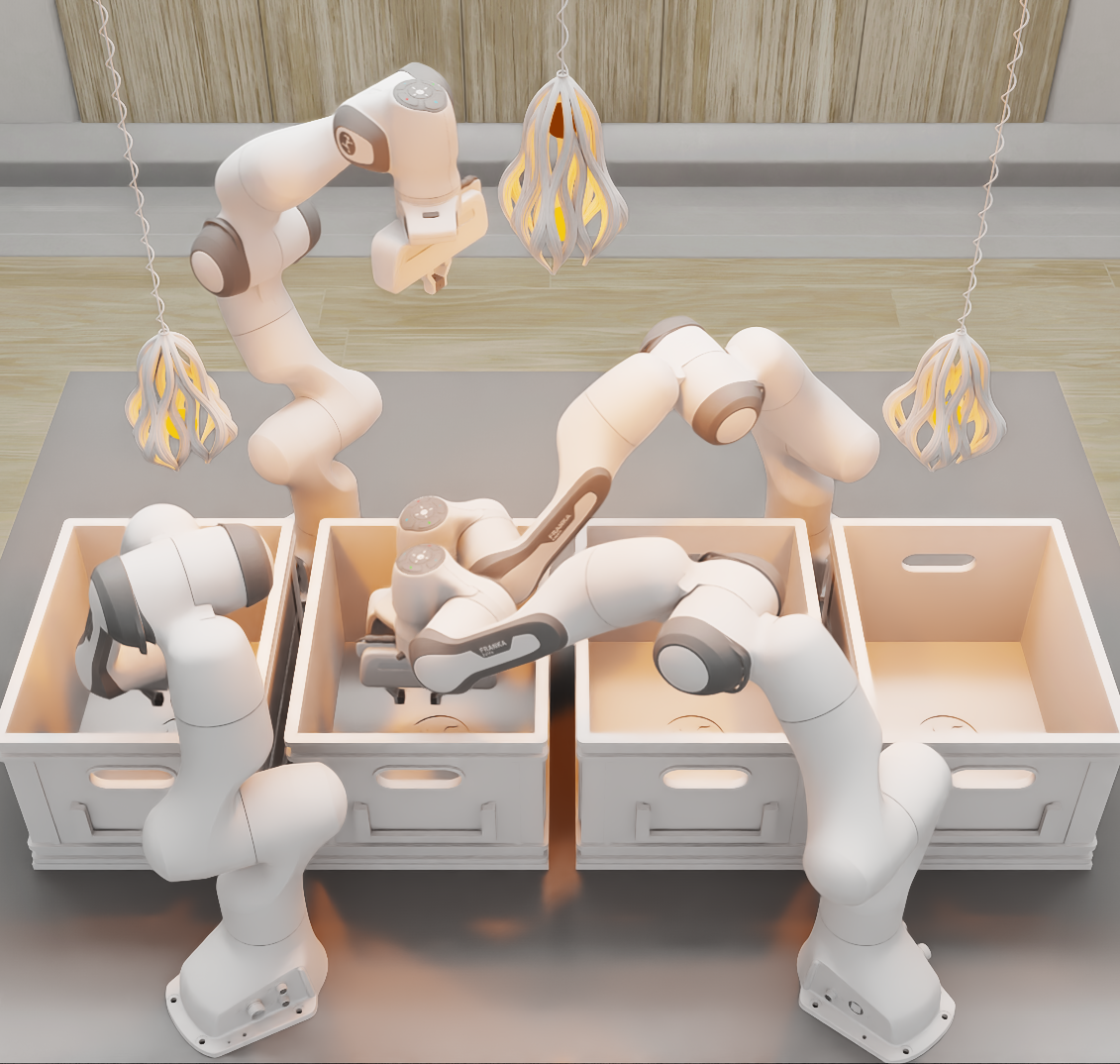}
  \hspace{0.3cm}
  \includegraphics[height=4.6cm, width=6.2cm]{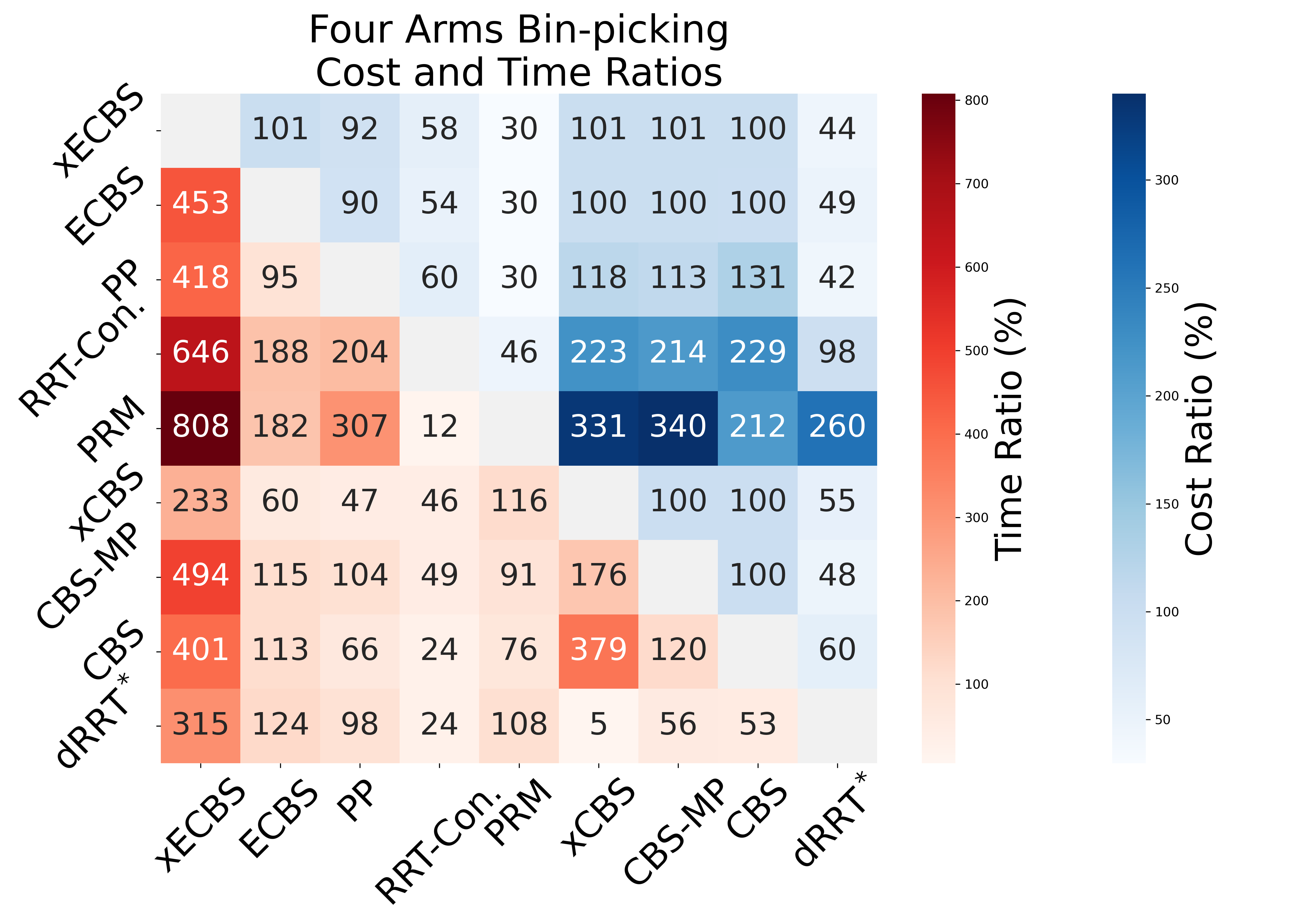}
\end{minipage}
\hspace{-0.5cm}
\begin{minipage}{0.36\textwidth}
\resizebox{\textwidth}{!}{%
\begin{tabular}{|l|c|c|c|}
\hline
                              & \cellcolor[HTML]{C0C0C0}Success & \cellcolor[HTML]{C0C0C0}Planning Time (sec) & \cellcolor[HTML]{C0C0C0}Cost (rad) \\ \hline
\rowcolor[HTML]{EFEFEF} 
\cellcolor[HTML]{C0C0C0}xECBS & \textbf{96\%}                   & \textbf{4.4 $\pm$ 3.4}                & \textbf{24.1 $\pm$ 3.5}            \\ \hline
\cellcolor[HTML]{C0C0C0}ECBS     & 82\% & 18.4 $\pm$ 16.9 & 23.3 $\pm$ 3.3  \\ \hline
\cellcolor[HTML]{C0C0C0}PP       & 84\% & 14.5 $\pm$ 17.1 & 25.4 $\pm$ 14.1 \\ \hline
\cellcolor[HTML]{C0C0C0}RRT-Con. & 42\% & 16.8 $\pm$ 14.2 & 42.0 $\pm$ 22.4 \\ \hline
\cellcolor[HTML]{C0C0C0}PRM      & 16\% & 9.3 $\pm$ 18.5  & 73.4 $\pm$ 46.2 \\ \hline
\cellcolor[HTML]{C0C0C0}xCBS     & 48\% & 8.6 $\pm$ 10.7  & 22.5 $\pm$ 2.9  \\ \hline
\cellcolor[HTML]{C0C0C0}CBS-MP   & 64\% & 14.4 $\pm$ 11.0 & 22.2 $\pm$ 2.6  \\ \hline
\cellcolor[HTML]{C0C0C0}CBS      & 28\% & 12.8 $\pm$ 11.2 & 21.8 $\pm$ 2.4  \\ \hline
\cellcolor[HTML]{C0C0C0}dRRT*    & 14\% & 6.1 $\pm$ 8.9   & 56.4 $\pm$ 20.8 \\ \hline
\end{tabular}%
}
\end{minipage}
  
  \caption{Evaluating the real-world applicability of planning algorithms. Left: evaluation scenes, with 8-arm shelf rearrangement and 4-arm bin-picking. Middle: Comparing planning time and cost for each row-name planner relative to the column-name planner. The values offer a fair comparison by considering only successful runs in both planners. For instance, xECBS has shorter planning times (red, above 100\%) and lower solution costs comparable to other CBS-based approaches (blue, around or below 100\%). xECBS is faster and finds short paths.
  Right: Success rate and mean$\pm$standard deviation among successful runs.}
  \label{fig:experimental_results}
\end{figure*}

\begin{figure*}[!h]

  \begin{minipage}{1.0\textwidth}
    \begin{minipage}{\textwidth}
      \centering
      \begin{minipage}{0.19\textwidth}
      \includegraphics[width=0.995\textwidth]{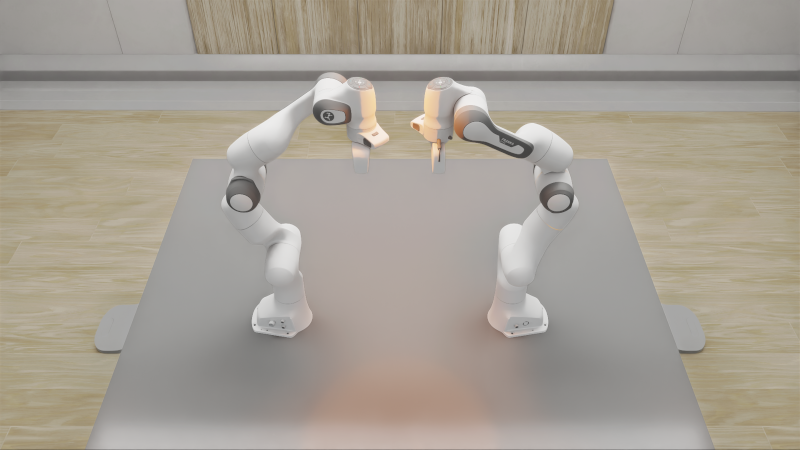}\\
      \end{minipage}
      \begin{minipage}{0.19\textwidth}
      \includegraphics[width=0.995\textwidth]{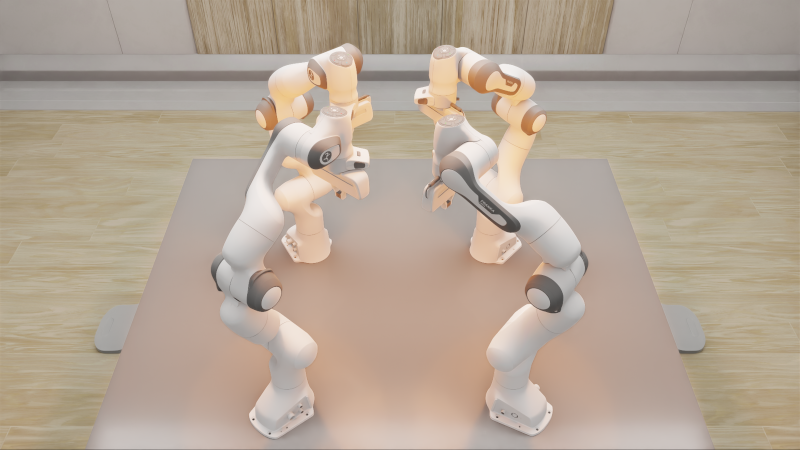}\\
      \end{minipage}
      \begin{minipage}{0.19\textwidth}
      \includegraphics[width=0.995\textwidth]{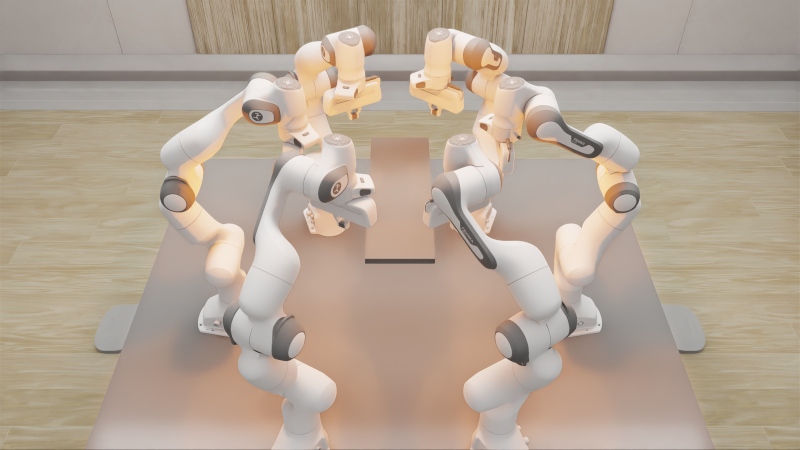}\\
      \end{minipage}
      \begin{minipage}{0.19\textwidth}
      \includegraphics[width=0.995\textwidth]{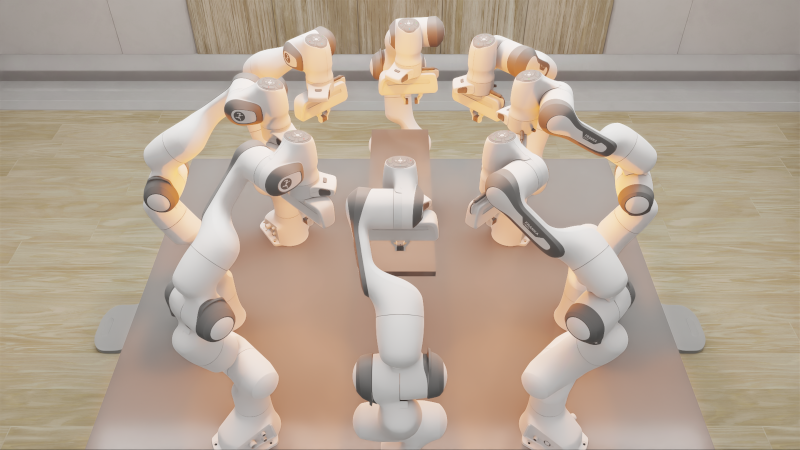}\\
      \end{minipage}
      \begin{minipage}{0.19\textwidth}
      \includegraphics[width=0.995\textwidth]{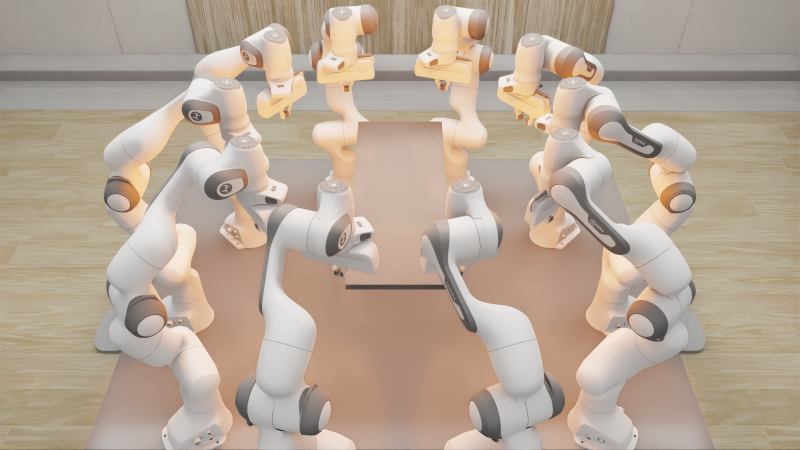}\\
      \end{minipage}
    \end{minipage}
  \end{minipage}
  
  \vspace{0.0cm}
  
  {
  \centering
  \begin{minipage}{1.0\textwidth}
    \centering
    \includegraphics[height=2.2cm]{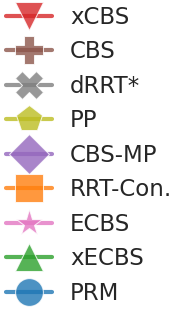}
    \includegraphics[width=0.2\textwidth]{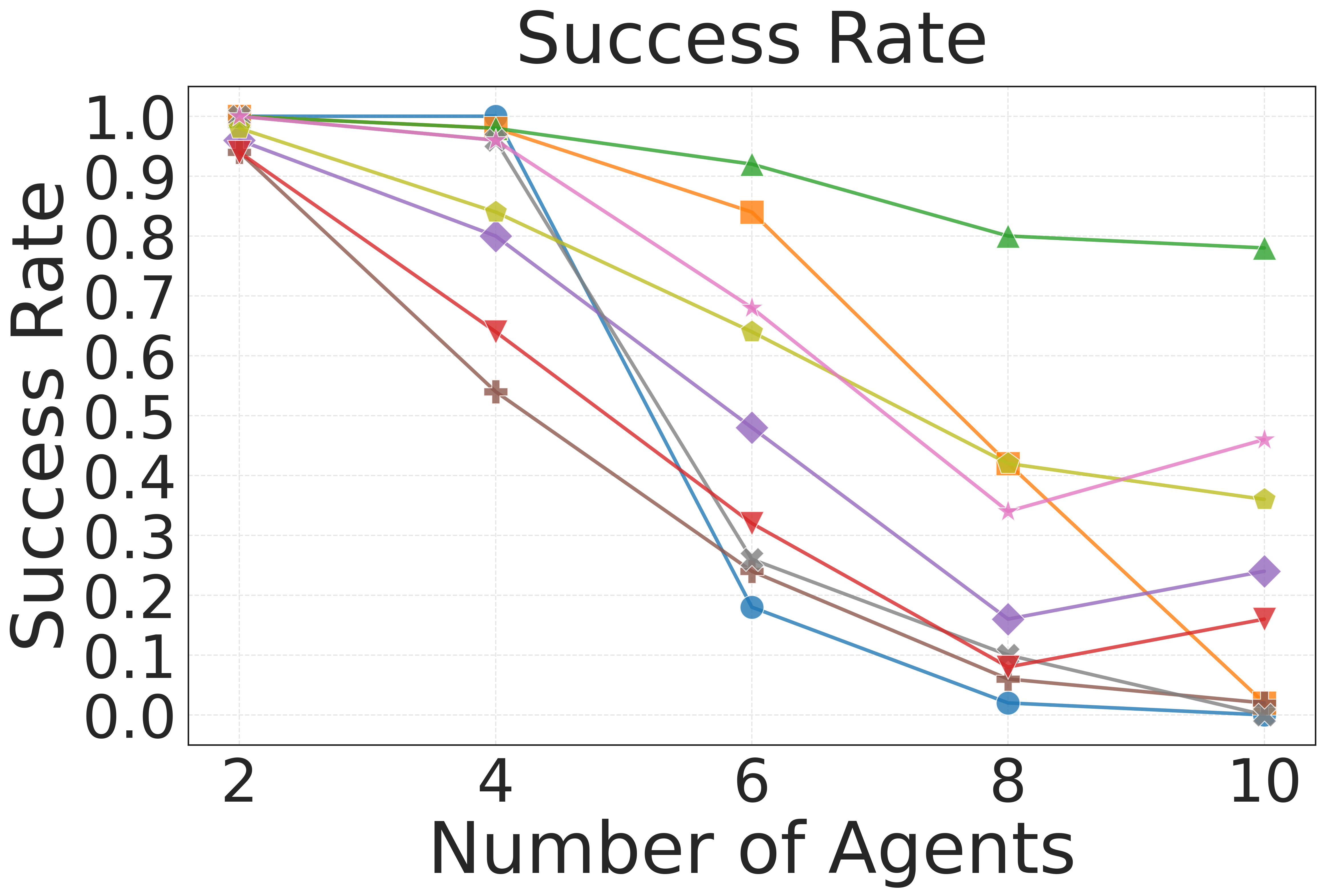} \hspace{0.3cm}
    \includegraphics[width=0.21\textwidth]{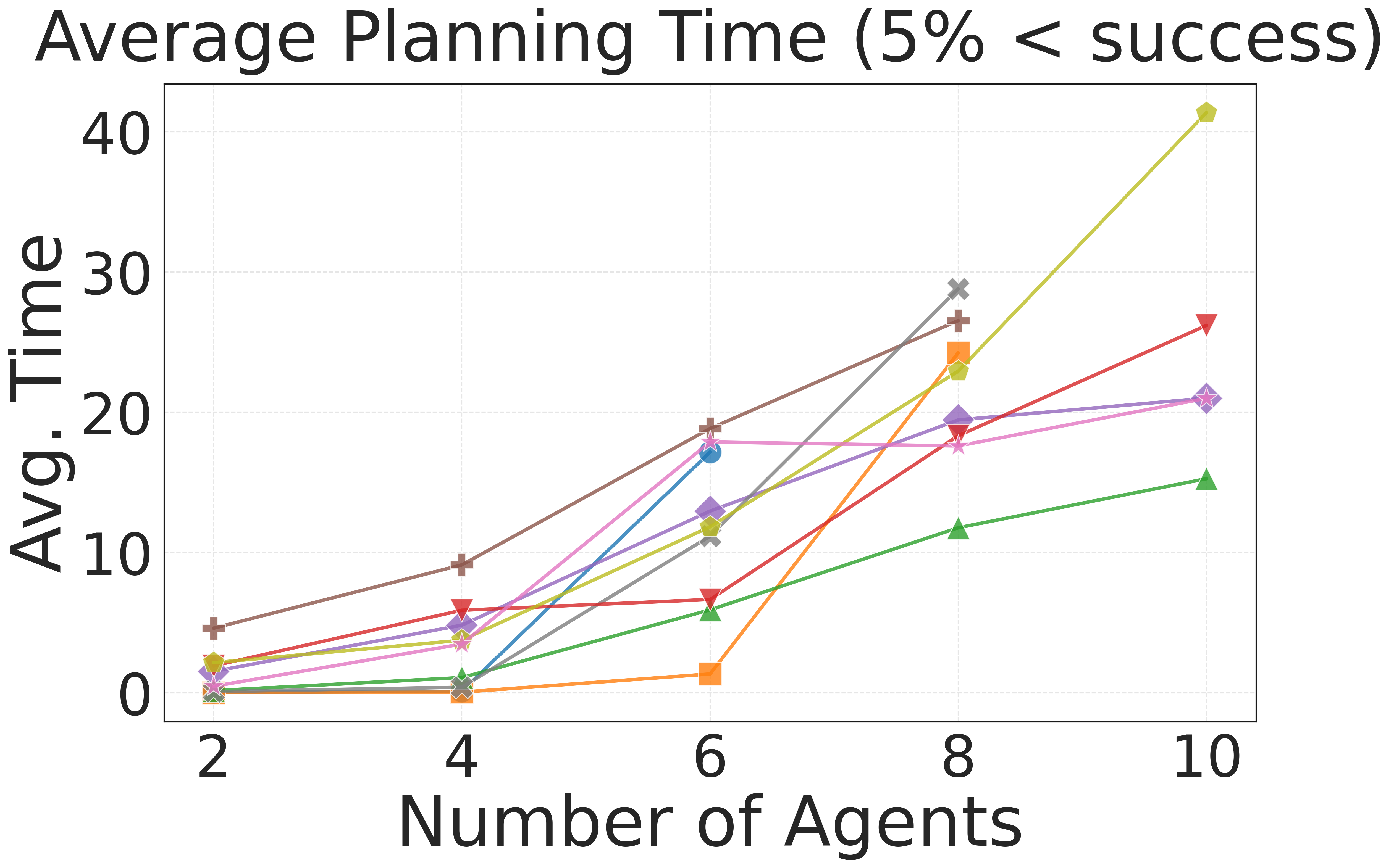} \hspace{0.3cm}
    \includegraphics[width=0.21\textwidth]{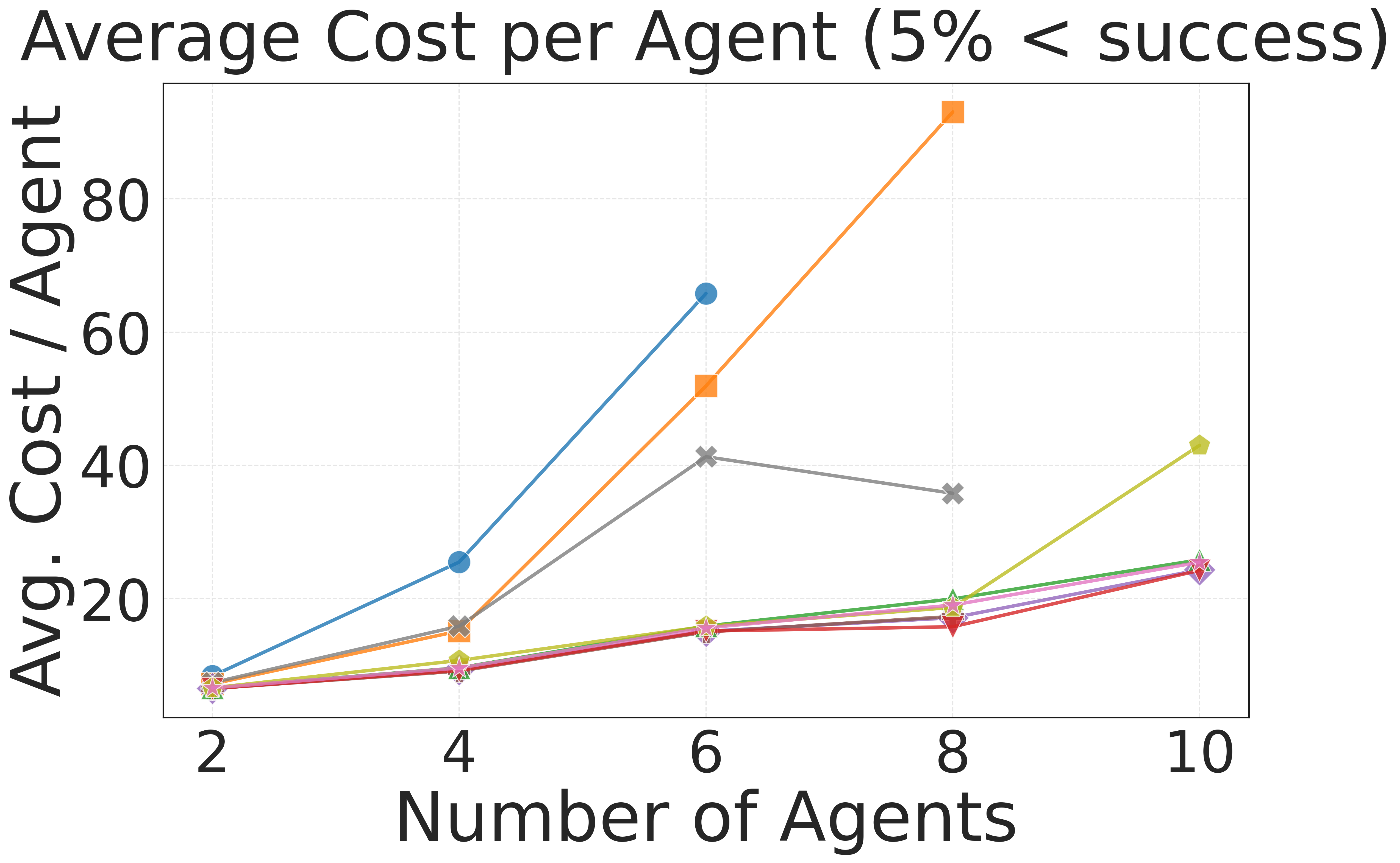} \hspace{0.3cm}
    \includegraphics[width=0.2\textwidth]{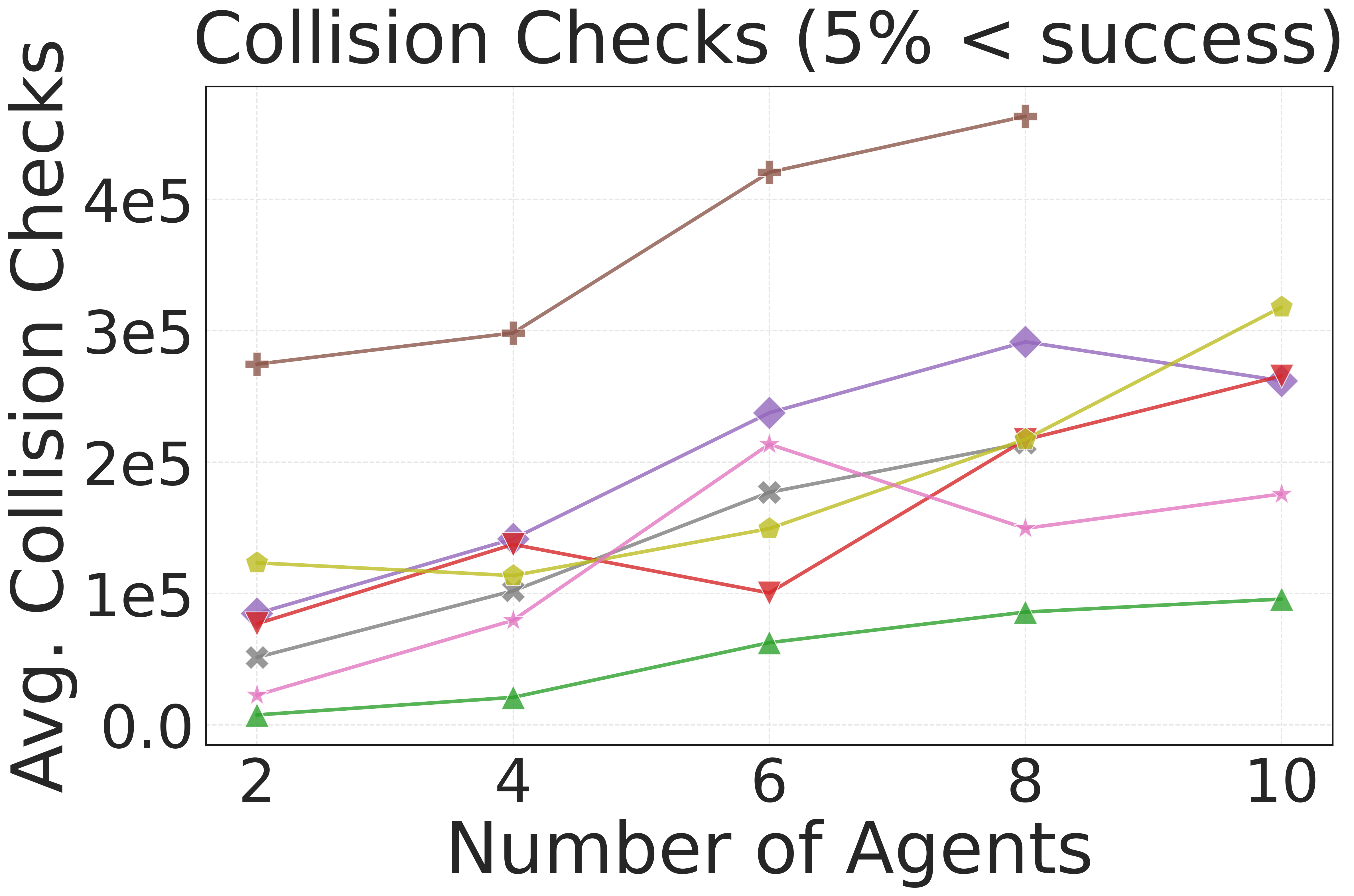}
  \end{minipage}
  }
  \caption{Scalability analysis. Top: our test scenes with 2, 4, 6, 8, and 10 robots. Bottom, from left to right: (a) success rate of methods in the $50$ planning problems of each scene. xECBS scales better than competing methods. (b) Average planning time in successful runs. xECBS maintains a lower planning time as the number of robots increases. (c) Cost per agent in successful runs. All CBS-based methods maintain similar costs while PP and sampling-based methods eventually produce worse paths. (d) Average number of collision checks. (b,c,d) report on planners with at least $5\%$ success to avoid unrepresentative data.}
  \label{fig:scalability_analysis}
\end{figure*}

\section{Experiments}
\label{sec:experiments}

To evaluate xECBS and xCBS, we created collaborative manipulation tasks with varying numbers of robots, obstacle density, and robot-robot interaction complexity, and evaluated them in simulation. We used MoveIt! \cite{coleman2014reducing} for environment handling and Isaac Sim for rendering. This setup can directly control real robots. Each robot in our experiments is a Franka Panda manipulator with 7-DoF. The experiments were conducted on an Intel Core i9-12900H with 32GB RAM (5.2GHz). 

\subsection{Experiments Setup}
Our experiments focus on testing the scalability of algorithms as well as their applicability for real-world use. We set up $7$ scenes, each with $50$ planning problems defined by starts $\compconf{\text{start}} \in \confspace{}{\text{free}}$ and goals  $\compconf{\text{goal}} \in \confspace{}{\text{free}}$.

To test the applicability of algorithms for real-world scenarios, we evaluated algorithms in two sample tasks: shelf rearrangement with 8 arms and bin-picking with 4 arms (Fig. \ref{fig:experimental_results}). For each scene, we randomly sampled 50 start and goal states from a set of task-specific configurations (e.g., pick and place configurations at different bins or positions in between shelves). Given that the robots operate within the same task space, these configurations require motion plans with substantial proximity between arms.

To shed light on how algorithms scale with the number of arms, we tested their performance in free or lightly cluttered scenes with 2, 4, 6, 8, and 10 arms as shown in Fig. \ref{fig:scalability_analysis}. 
The starts and goals for each agent are in the shared workspace region. In each setup, robots were organized in a circle, and in the cases with 6, 8, and 10 robots, a thin obstacle was placed in the circle to encourage interaction.

\subsection{Baselines}
To show the efficacy of our method, we compare it to ubiquitous algorithms commonly used to solve the M-RAMP problem and other algorithms recently applied to M-RAMP.
\subsubsection*{Sampling-Based Methods} 
We include PRM and RRT-Connect, which are arguably the most commonly used algorithms for planning in manipulation. For both, the search space is the composite state space $\confspace{}{}$. We use their OMPL \cite{OMPL2012} implementation. We include dRRT* \cite{shome2020drrt}, a more recent algorithm applied to M-RAMP that explores the composite state space via transitions on single-agent roadmaps. In our implementation, the single-agent roadmaps contain a minimum of 1,500 nodes, with increments of 1,000 added if the roadmap cannot be connected to the start or goal configurations.

\subsubsection*{Search-Based Methods} We include PP, CBS, ECBS, and CBS-MP in our comparison, as well as our proposed methods xCBS and xECBS. For all, the single agent planners are weighted A* with a uniform cost for transition and an $L_2$ joint-angle distance as a heuristic. The heuristic inflation value \footnote{Our heuristic underestimates the cost to go in radians, and edges are unit cost. The weight $w_1^L$ scales the heuristic value to match the cost of edge transitions and inflates it.} is $w_1^L = 50$, and in ECBS and xECBS, the sub-optimality bound is set to $w^H = w_2^L = 1.3$. 

Our implementation of CBS-MP differs slightly from the original in that, here, agents plan on discretized implicit graphs and not on precomputed roadmaps. This has been done to compare all search algorithms on the same planning representation.
All edge transitions on the implicit graphs are said to take one timestep.

\subsection{Evaluation Metrics and Postprocessing}
We are interested in the scalability and solution quality of algorithms. To this end, for each scene, we report the mean and standard deviation for planning time and solution cost across all segments, alongside the success rate of each algorithm in the scene. All algorithms were allocated 60 seconds for planning, after which a plan was considered a failure. The cost is the total motion (radians) carried out by the joints. 

All solutions are post-processed with a simple incremental shortcutting algorithm \cite{choset2005shortcutting}. One by one, each agent's solution path is shortcutted without creating new conflicts. Starting from the beginning of the path, the algorithm attempts to replace path segments by linear interpolations while avoiding obstacles and other agents.

\subsection{Experimental Results}
We observe that xECBS solves real-world multi-arm manipulation planning problems faster and with a higher success rate than other evaluated methods while keeping solution costs low. Both xCBS and xECBS improve on CBS and ECBS in general, however, xECBS offers a much larger boost in performance and is more suitable for M-RAMP. 

xECBS proved successful in our 4-robot bin-picking and 8-robot shelf rearrangement examples. Underscored by Fig. \ref{fig:experimental_results} (middle),
xECBS demonstrates faster planning speed (red, above $100\%$) while delivering low-cost solutions comparable to those achieved by other CBS-based approaches (blue, values below or around $100\%$).  

Our scalability analysis primarily focused on interactions between robots and minimized obstacle density. Here,
xECBS scaled to scenes with more agents better than competing methods, consistently solving more problems. The experience reuse in xECBS allowed it to query a collision checker less than the other methods we implemented and aided in reducing its planning times. 

Across our experiments, the costs of all CBS-based methods, including xCBS and xECBS, were very similar. This stability sheds light on how experience reuse can maintain solution quality while also expediting the search.
Sampling-based methods, however, faced notable challenges in scenarios with tight clutter and coordination, such as the 8-robot shelf rearrangement task. 
Not only did these methods struggle to find solutions, but the solutions they produced had high costs and variability.

\section{Conclusion}

Popular multi-agent path finding algorithms like CBS and ECBS assume fast single-agent planners, which may not be available in multi-arm manipulation tasks. To address this, we propose to accelerate conflict-based algorithms by reusing online-generated path experiences and demonstrate their benefits in xCBS and xECBS. These adaptations improve performance in multi-arm manipulation tasks while ensuring bounded sub-optimality guarantees. Our experiments demonstrate the proposed method's effectiveness in various multi-arm manipulation tasks with up to 10 arms. We observe that xECBS is particularly effective in real-world scenarios such as pick and place and shelf rearrangement, achieving higher success rates and lower planning times than currently available methods.

\section*{Acknowledgements}
The work was supported by the National Science Foundation under Grant 2328671 and the CMU Manufacturing Futures Institute, made possible by the Richard King Mellon Foundation.
\bibliography{aaai24}

\end{document}